\theoremstyle{plain}
\newtheorem{theorem}{Theorem}[section]
\newtheorem{lemma}[theorem]{Lemma}
\theoremstyle{definition}
\newtheorem{definition}[theorem]{Definition}
\newtheorem{assumption}[theorem]{Assumption}
\theoremstyle{remark}
\def\eqref#1{equation~\ref{#1}}
\def\1{\bm{1}}
\def\vvxi{{\bm{\xi}}}
\def\ve{{\bm{e}}}
\def\vf{{\bm{f}}}
\def\vg{{\bm{g}}}
\def\vv{{\bm{v}}}
\def\vw{{\bm{w}}}
\def\vx{{\bm{x}}}
\def\vy{{\bm{y}}}
\def\vz{{\bm{z}}}
\def\vxi{{\bm{\xi}}}
\def\vD{{\bm{\gD}}}
\DeclareMathAlphabet{\mathsfit}{\encodingdefault}{\sfdefault}{m}{sl}
\SetMathAlphabet{\mathsfit}{bold}{\encodingdefault}{\sfdefault}{bx}{n}
\def\gA{{\mathcal{A}}}
\def\gD{{\mathcal{D}}}
\def\gL{{\mathcal{L}}}
\def\gO{{\mathcal{O}}}
\newcommand{\E}{\mathbb{E}}
\newcommand{\R}{\mathbb{R}}
\DeclareMathOperator*{\argmax}{arg\,max}
\DeclareMathOperator{\Tr}{Tr}
\icmltitlerunning{ Data Selection for Multimodal Contrastive Learning}
\begin{document}

\twocolumn[
\icmltitle{Variance Alignment Score: A Simple But Tough-to-Beat \\Data Selection Method for Multimodal Contrastive Learning}




\icmlsetsymbol{equal}{*}

\begin{icmlauthorlist}
\icmlauthor{Yiping Wang}{equal,uw}
\icmlauthor{Yifang Chen}{equal,uw}
\icmlauthor{Wendan Yan}{uw}
\icmlauthor{Kevin Jamieson}{uw}
\icmlauthor{Simon Shaolei Du}{uw}
\end{icmlauthorlist}

\icmlaffiliation{uw}{Paul G. Allen School of Computer Science \& Engineering, University of Washington, Seattle}

\icmlcorrespondingauthor{Yiping Wang}{ypwang61@cs.washington.edu}

\icmlkeywords{Machine Learning, ICML}

\vskip 0.3in
]



\printAffiliationsAndNotice{\icmlEqualContribution} 

\begin{abstract}
In recent years, data selection has emerged as a core issue for large-scale visual-language model pretraining, especially on noisy web-curated datasets. One widely adopted strategy assigns quality scores such as CLIP similarity for each sample and retains the data pairs with the highest scores. However, these approaches are agnostic of data distribution and always fail to select the most informative samples. 
To solve this problem, we propose a simple yet theoretically principled metric named \textbf{V}ariance \textbf{A}lignment \textbf{S}core (\textsf{VAS}), which has the form $\langle \Sigma_{\text{test}}, \Sigma_i\rangle$.
Here, $\Sigma_{\text{test}}$ represents the target (cross-)covariance matrix we aim to align, potentially based on prior knowledge, while $\Sigma_i$ denotes the tensor product of single or multi-modal representations for the $i$-th sample.
We further design a new data selection method that maximizes the total \textsf{VAS}.
We provide theoretical analysis in a simplified setting to demonstrate the theoretical advantage of \textsf{VAS} over random or other existing data selection.
Experimentally, applying VAS and CLIP scores together can outperform baselines by a margin of $1.3\%$ average on 38 evaluation sets for noisy dataset DataComp and $2.5\%$ on VTAB for high-quality dataset CC12M. Additionally, our ablation study also shows visual features are better than text for calculating VAS, and the related classical experimental design methods may fail under this context.
\end{abstract}

\section{Introduction}

Curating web-scale visual-language datasets for pretraining models has become a widely adopted approach to improve visual model performance. Many studies have demonstrated that, despite the use of fixed models and training methods, the final performance of a model is significantly impacted by the choice of data \cite{radford2021learning, schuhmann2022laion, cherti2023reproducible, bai2023qwen}. Consequently, various data selection strategies have been proposed to tackle the challenges posed by large datasets with varied data quality and limited computational resources. These strategies, also known as filtering, subset selection, or pruning in the literature, each focus on slightly different objectives. Some strategies prioritize screening out unhelpful samples to maximize the final model performance without budget constraints, while others aim to maximize performance compared to other randomly sampled subsets within the same budget limit. Practically, these strategies are not mutually exclusive, and in this paper, we will evaluate our proposed strategies in both contexts.
%

One line of effective approaches treats each sample independently by assigning a quality score to each individual and only training on the samples that received the highest scores.
For example, the baseline CLIP score filtering method uses a pretrained CLIP model to calculate the cosine similarity score between image and text embeddings, and then discards those pairs that are less aligned. Some follow-up works include \citet{nguyen2023improving,mahmoud2023sieve,maini2023t,fang2023data} which propose more advanced image-text alignment scoring methods by using other pretrained classification models or more carefully crafted rule-based approaches. Please refer to related works Sec.~\ref{sec: related work} for a more detailed description.

All those works, however, failed to consider the selection strategy from data distribution perspective. This disadvantage becomes more significant with limited computational resources, as high-quality data are not always the most informative or representative. In some cases, certain feature can be easily learnt under much less samples than other features, so adding more high quality data with such feature can hardly further improve the ultimate model. In other cases, certain features rarely occur in our targeted test distribution, and therefore, adding the corresponding samples will have less effect on test performance. Thus, our work proposes a \textit{data-distribution-aware} strategy that can be combined with existing strategies focused on individual data quality. 

One of the most relevant works to ours is \citet{maharana2023d2}, which also considers the example difficulty and  diversity by using forward and
reverse message passing over a constructed dataset graph.  Their approach, however, is designed for general learning frameworks for both supervised and unsupervised (i.e. contrastive) learning without considering the special impact of data distribution in \textit{multi-modal contrastive learning}. 
Another relevant strategy is using the external image-set as data prior proposed by \citet{gadre2023datacomp}, which clusters the image embedding of the candidate pool and select the samples whose cluster center is the nearest center for at least one ImageNet-1K training example. But this method is also comes from more heuristic inspiration without considering the particular learning method, and is therefore more hyper-parameter sensitive.
In contrast to both approaches, our strategy is inspired by recent theoretical progress in those contrastive representation learning works \cite{nakada2023understanding,ji2023power}. As a results, our work achieves much better average performance over the 38 downstream tasks compared to these two and is almost hyper-parameter-free.

\begin{figure*}[t]
    \centering
    \small
    \includegraphics[width=0.87\textwidth]{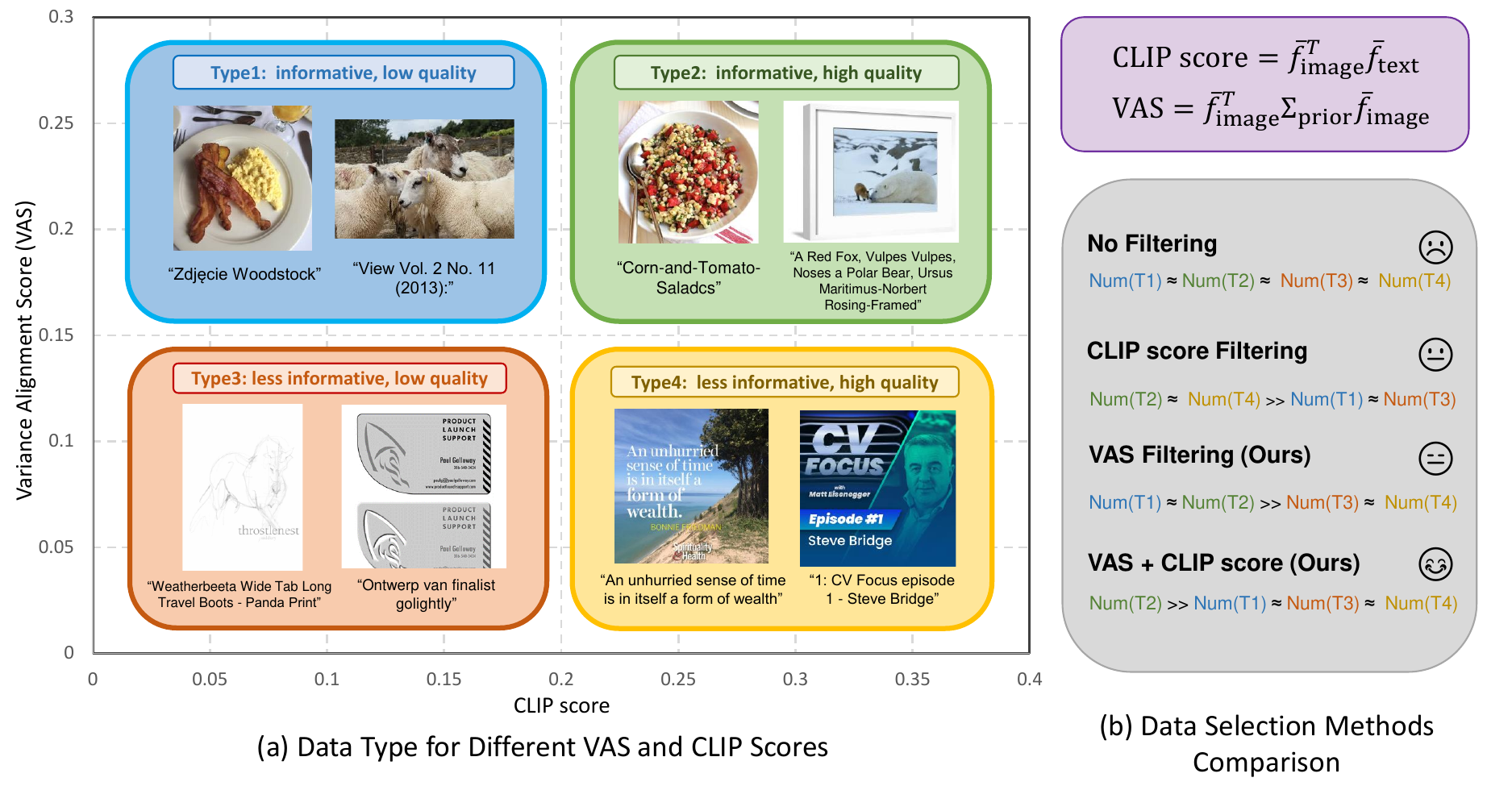}
    \vspace{-1.2em}
    \caption{Illustration of our VAS on DataComp. $\bar f$ means the embeddings calculated by the pre-trained model, and $\Sigma_{\text{prior}}$ matrix is the test prior co-variance matrix, can be chosen as the co-variance matrix of the image embeddings from ImageNet-1k or DataComp itself. (a) Visualization of data with different Variance Alignment Score (VAS) and CLIP score in DataComp. CLIP score does not efficiently evaluate the informativeness of image-text pairs. Data from OCR tasks or unconventional visual tasks (Type 4) can have high quality but little useful visual information. However, VAS can select the data with more meaningful image features (Type 1 and Type 2).
    (b) Illustration of a rough comparison of sampling data for different filtering methods. Using VAS $\cap$ CLIP score filtering can balance the informativeness and quality and increase the proportion of Type 2 data, which are the most helpful data for training image-text models. Please refer to Appendix.~\ref{sec: add_vis} for more visualization results.}
    \label{fig:vas_cs}
    \vspace{-1.0em}
\end{figure*}

Specifically, we propose \textbf{V}ariance \textbf{A}lignment \textbf{S}core (\textsf{VAS}) which aims to find the most informative subset of the training data whose (cross-)covariance between the image-text pairs most aligns with the (cross-)covariance of a reference distribution.
For example, the reference distribution could be the entirety of the pool from which a subset is being chosen, or some other more task-dependent dataset.  
We also combine VAS with a method based on an individual quality metric as it provides a boost in performance empirically. (Sec.~\ref{sec: main})
To validate the effectiveness of our strategy, we primarily leverage the DataComp benchmark \cite{gadre2023datacomp}. Compared with the existing best strategies, it improves 1.3\% averagely on 38 evaluation sets for DataComp without budget constraints and improves averagely 0.4\% for CC12M with budget constraints (see Sec.~\ref{sec: experiment}).

In addition to empirical results, we provide a theoretical justification for our strategy under a linear model assumption. Firstly, assuming we are given a perfect embedding model and proper data prior, we show that choosing the subset to minimize the multi-modal contrastive loss, including the CLIP loss, can be reduced to the problem of maximizing the total VAS score of the subset. Secondly, we prove a generalization bound when the embedding model and data prior are non-perfect (see Sec.~\ref{sec:theory}).

\vspace{-0.5em}
\section{Related Works}
\label{sec: related work}

\paragraph{Distribution-agnostic filters.} 
Raw data crawled from the internet can be highly noisy -- some samples have text failing to describe their image part, and other samples have misleading elements in their image that we don't want the model to focus on. Many strategies have been proposed to algorithmically detect those corruption.
\citet{nguyen2023improving, mahmoud2023sieve} both leverage a pre-trained visual-language model BLIP to caption the image and then compare their similarity with the original caption to detect the mismatch\footnote{Here \citet{nguyen2023improving} replaces the corrupted text with their newly generated caption instead of pruning the whole sample, but the high-level idea is similar.}. Rather than consider a sample level mapping, \citet{maini2023t} further looks into the detailed elements in the image. Their strategies remove images whose text inside is the only feature correlated with the caption by using off-the-shell OCR models. Notably, despite its effectiveness, it has a much higher computational cost than our strategy due to calling text detection models. Finally, instead of using existing models, \citet{fang2023data} proposes to train a data filtering network directly on existing high-quality data. 
\paragraph{Distribution-aware data selection.}
There exists many classical coreset selection which select out the most representative and informative data by taking data distribution into control, including \citet{wei2015submodularity,raskutti2016statistical,coleman2019selection}. All those works, however, make the assumption that we are under a fixed budget and aims to increase the subset performance reach to the pool performance. 
There also exist some recent distribution-aware selection works for CLIP pretraining. \citet{wang2023too} first trained a codebook in replace of existing pretrained models like BLIP. Then they cluster the samples based on their image corresponding codes and selects a representative subset of samples from each cluster. 
Later \citet{maharana2023d2} considers the example difficulty and data diversity by using
forward and reverse message passing over a constructed
dataset graph.
\paragraph{Dataset as prior.} 
Neither strategy mentioned above considers the distribution shift in test data. Although such downstream task is unknown in advance, it is possible to estimate via some proxy dataset as prior. This was first proposed in \citet{gadre2023datacomp} relying on sampling image text pairs that are semantically similar to diverse and curated datasets like ImageNet~\cite{deng2009imagenet}. Later \citet{xu2023cit} also use ImageNet as prior but measuring the similarity with both text and meta information. Their work focused on curation instead of subset selection but the high level idea is similar. 


\vspace{-1em}
\section{Problem Setup}

We are given a training dataset $D_\text{train} = \{\vx^{v}, \vx^{l}\}$, where $\vx^{v}, \vx^{l} \in \R^d$ is the image-text (vision-language) training pair. For convenience, we will let subscript $vl$ denote either modal so that, for example, $\vx^{vl} \in \{\vx^{v}, \vx^{l}\}$. We aim to identify a subset $S \subset D_\text{train}$, with or without the subset size constraints, that maximizes zero-shot accuracy on some downstream tasks when $S$ is used to train a CLIP model.

\paragraph{CLIP score and embedding.} Recent efforts, such as LAION \cite{schuhmann2022laion} and DataComp \cite{gadre2023datacomp} used OpenAI’s CLIP ViT-L/14 model \cite{radford2021learning} as a teacher model to obtain quality score. Here we denote this vanilla CLIP model as $\bar{f}_{vl}$.
For any pair $\vx^{vl}$, the model outputs a normalized unit-vector $\bar{f}_{vl}(\vx^{vl})$. 
Let $\bar{s}_{ij} := \langle  \bar{f}_v(\vx_i^v), \bar{f}_l(\vx_j^l) \rangle \in [-1,1]$ denote the pairwise cosine similarity between $i$-th image and $j$-th text. 
In the subsequent sections, we will refer to $\bar{s}_{ii}$ as the “CLIP score” from an existing-pretrained ``vanilla'' model publicly available. 
Intuitively, pairs with higher clip scores are more likely to be effective samples since the image is aligned with the text. However, as we will discuss later, this hypothesis is not always true. 
\vspace{-1em}
\paragraph{Dataset and model.} Here we follow the pipeline of Datacomp \cite{gadre2023datacomp} to standardize the training and evaluation process. This is a testbed for dataset experiments aiming to open-source and further improve the vanilla CLIP model and is widely adopted in previous data selection papers~ \cite{nguyen2023improving,maharana2023d2,maini2023t,fang2023data,mahmoud2023sieve,bai2023qwen}. We will give more details in Sec.~\ref{sec: experiment}.
\section{Data Filtering Strategy}
\label{sec: main}
Fig.~\ref{fig:vas_cs} gives an overview of our strategy. We give more details below.

\subsection{Variance Alignment Score (VAS)}

Most existing data selection strategies focus on individual sample quality by measuring the similarity between image and text features (e.g., CLIP score). Yet, they fall short in evaluating how each data point contributes to the model's ability to learn meaningful representations in the context of a target distribution and \emph{other} points included in the set. To measure the informativeness of an individual sample for the overall multi-modal contrastive training, we introduce a novel, data-distribution-aware metric named Variance Alignment Score (VAS) defined as
\begin{align}\label{eq:def_VAS}
    & \text{VAS}_i(m_1,m_2) = \bar f_{m_1}(\vx_i^{m_1})^\top \bar{\Sigma}_\text{test} \bar f_{m_2}(\vx_i^{m_2}) \\
    & \text{where }
    \bar{\Sigma}_\text{test} =  \E_{\vx \sim \vD_\text{test\_proxy}} \bar{f}_{m_1}(\vx^{m_1}) \bar{f}_{m_2}(\vx^{m_2})^\top \nonumber
\end{align}
Here $m_1,m_2$ are the modalities we choose and can be visual or language. And $\vD_{\text{test\_proxy}}$ is some proxy test data distribution when the actual downstream tasks are unknown, which is also called ``data prior" in some previous work as we mentioned in Section~\ref{sec: related work}. 
The specific choice of $\bar{\Sigma}_{\text{test}}$ and $m_1,m_2$ is crucial and we left details in the next section~\ref{subsec: main strategy}. Here we provide a high level intuition behind VAS.

According to the no-free-lunch theorem, it is impossible to optimize the learning of every feature equally well within a limited budget. In other words, we cannot find a universally optimal strategy for any arbitrary test distribution. Therefore, we tend to select samples that contain effective features in test distributions. Given our focus on this particular multi-modal contrastive learning framework, we leverage insights from previous theoretical generalization bounds as discussed in \citet{ji2023power,nakada2023understanding}. These works suggest that, under a good embedding model $\Bar{f}$, the variance or cross-variance term, denoted as $\frac{1}{N}\sum_i\bar{f}_{m_1}(\vx_i^{m_1}) \bar{f}_{m_2}(\vx_i^{m_2})^\top$, is a suitable criterion for indicating informativeness.
For the rigorous theoretical justification, please refer Lemma~\ref{lem: main} .

\subsection{Main Strategy}
\vspace{-0.5em}
\label{subsec: main strategy}
\begin{algorithm}
\caption{VAS-based strategy}
\label{algo: main}
\begin{algorithmic}[ht]
    \STATE \textbf{Step 1:} \textit{Conservatively} remove low CLIP score samples. (i.e., almost certain low-quality samples)
    \STATE \textbf{Step 2.1:} Choose proper $\vD_{\text{test\_proxy}}$
    \STATE \textbf{Step 2.2:} Select $S = \arg \max_{|S| = N} \sum_{i \in S}\text{VAS}_i(v,v)$
\end{algorithmic}
\end{algorithm}
\vspace{-0.8em}
We then propose a simple two-stage pruning strategy in Algo.\ref{algo: main} that considers both individual quality and overall distribution by combining the CLIP score with VAS. 

In step 1, we first remove those easily distinguishable low-quality samples, such as low-resolution images or mismatched text (Type 1 and Type 3 data in Fig.~\ref{fig:vas_cs}).
However, unlike the native CLIP score filter in \citet{gadre2023datacomp}, which greedy eliminates 70\% of the data in both DataComp and CC12M datasets, we remove much less data (e.g. around half of the data). Note the overall performance is not sensitive to small variation in removal ratio. 
Subsequently, in step 2, we select a data prior, $\vD_{\text{test\_proxy}}$, either by choosing the external ImageNet-1K or heuristically selecting a subset from the training set which only contains high-frequency features (Details in Eqn.~\ref{eq:tr_S} and \ref{eq:tr_S2}). Based on the selected $\vD_{\text{test\_proxy}}$, we compute the vision-only VAS score and select a subset of size $N$ with the highest scores. Detailed discussions and ablation study results are provided to highlight the significance of these three designs.
\vspace{-0.8em}
\paragraph{CLIP score does not indicate informativeness.} While it is intuitive that extremely low CLIP scores suggest low sample quality, the converse that high CLIP scores suggest high sample quality is not necessarily true. 
For example, if an image contains words that also appear in the caption, that may produce a high CLIP score but not necessarily the kind of relationship between image and caption that we hope to learn. 
In Fig.~\ref{fig:vas_cs}, we observe that data with similar CLIP scores can obtain very different VAS. Some high CLIP score samples can have low VAS and these samples are more related to OCR tasks and unconventional visual features which are intuitively non-helpful.
Therefore, we adopt a two-stage approach wherein the first stage we remove samples with very low CLIP scores, and then apply VAS to the remaining data to perform a more fine-grained selection.

\vspace{-0.8em}
\paragraph{Choice of $\bar{\Sigma}_\text{test}$.} Although the resulting model is evaluated over 38 different downstream tasks that are unknown to the learner in advance, we find out that using ImageNet-1k \cite{deng2009imagenet} as the test proxy achieves good performance. Similar effectiveness has also been observed in \citet{gadre2023datacomp, xu2023cit}. We conjecture that this due to the similar variance distribution between ImageNet-1k and other downstream tasks. In other words, despite different tasks, they all share similar effective features. 

Besides using the external dataset ImageNet-1k, we also propose a heuristic method called VAS-D (dynamic) which \textit{only} uses the training data itself as prior. We called it ``dynamic'' because in this method we have that $\bar{\Sigma}_\text{test}$ is dynamically updating based on selected $S$ as in Eqn.~\ref{eq:tr_S}. The high-level principle is still similar. For some hyper-parameter $N$, 
\begin{align}\label{eq:tr_S}
    & S = \arg \max_{|S| = N} \sum_{i \in S}\bar f_v(\vx_i^v)^\top \bar{\Sigma}_{\text{test}} \bar f_v(\vx_i^v) \\ 
    & \bar{\Sigma}_{\text{test}} = \frac{1}{|S|}\sum_{j \in S}\bar f_v(\vx_j^v)\bar f_v(\vx_j^v)^\top \nonumber
\end{align}
And it is obviously that (\ref{eq:tr_S}) is equivalent to
\begin{equation}\label{eq:tr_S2}
    S = \arg \max_{|S| = N} 
    \Tr\left((\sum_{j \in S}\bar f_v(\vx_j^v)\bar f_v(\vx_j^v)^\top)^2\right)
\end{equation}
Surprisingly, (\ref{eq:tr_S2}) is conflicted with the general principle in target-agnostic optimal experiment design, which emphasizes the diversity of selected data~\cite{pukelsheim2006optimal}. However, in the Experimental parts (Sec.~\ref{sec: experiment}) we will show that in the context of data selection for training CLIP, the idea of optimal experiment design does not work well.
For the theoretical justification on how the choice of $\bar{\Sigma}_\text{test}$ affects the final performance, please refer to Theorem~\ref{them: main}. 

\vspace{-0.8em}
\paragraph{Necessity of using visual-only information.} The $\vx^{vl}$ we observed comes from the high dimensional representation function mapping from low-dimensional latent vectors. The motivation behind using VAS relies on a good embedding model $\bar{f}_{vl}$ that can recover this latent vector. Hence, if there exists some perfect $\bar{f}_{vl}$, then choosing both visual and language information (i.e. $m_1,m_2 = v,l$) leads to best selection. In practice, however, we found out that visual CLIP model yield much better performance, which may suggest the text embedding is less accurate in recovery. Therefore, we choose to use vision-only information. Also, for the rigorous theoretical justification on how the performance of teacher CLIP model affect the final performance, please refer to  Theorem~\ref{them: main} in Section~\ref{sec:theory}.

\section{Theoretical Interpretation}\label{sec:theory}
In this section, we give a theoretical justification on the Step 2.1, 2.2 of our proposed strategy under the linear model assumptions when low quality image and mismatched text has already been removed.
\vspace{-0.5em}
\subsection{Theoretical Setup}

\paragraph{Training data.} For any $\vx^{v}, \vx^{l} \in \R^d$ observable image and text training pairs, we define $\vz^{v}, \vz^{l}$ to be the corresponding latent vectors which contain all semantically pertinent information about our tasks of interest. Similar to previous theoretical work \cite{nakada2023understanding},
we assume each i.i.d pair $\vz^{vl}$ follows zero-mean sub-gaussian distribution whose cross-covariance satisfies
\begin{align*}
    & \text{Cov}(\vz^v,\vz^l) 
    = \Sigma_\text{train} = \text{diag}(\sigma_1,\sigma_2, \ldots),
    & \|\vz^{vl}\| = 1
\end{align*}
and each $\vx^{vl}$ is generated based on a linear model such that
\begin{align*}
    \vx^{vl} = G^*_{vl} \vz^{vl} + \vxi^{vl}.
\end{align*}
Here $G^*_{vl} \in O_{d\times r}$ is the othonormal ground truth representation mapping from the latent vector space to the input space, and $\xi^{vl} \sim \mathcal{N}(0, I_d)$ are \textit{i.i.d.} random noise. 

Also we denote the cross covariance of any finite dataset $S'$ (e.g. the given train set $D_\text{train}$) as $\Sigma_{S'}$.

\paragraph{Test data.} For any zero-shot downstream task, we assume it shares almost same data generation process as the training set, except its the cross-covariance $\Sigma_\text{test}$ does not necessarily equal $\Sigma_\text{train}$, which necessitate the choice of $\Bar{\Sigma}_{\text{test\_proxy}}$.


\paragraph{CLIP embedding model as teacher.} Under the linear model assumption, we have a teacher model $\bar{f}_{vl} = \bar{G}_{vl}$, whose generated clip embedding can partially recover the ground truth hidden vector $\vz^{vl}$ with error. 

Formally, we say teacher has $\epsilon_{v}^n$ error if for all possible $n$ budget subsets $S \subset D_\text{train}$,
\begin{align*}
    \frac{1}{|S|} \left\| \sum_{\vx^{vl} \in S} \bar{G}_v^\top \vx^v (\vx^v)^\top \Bar{G}_v - \sum_{\vx^{vl} \in S} \vz^v (\vz^v)^\top \right\|_* \leq \epsilon_{v}^n
\end{align*}
where the same notation applies for the language modal. 
By the orthonormal assumption on the ground truth matrix $G_{vl}^*$, we see that $\bar{G}_v^\top$ is aiming to inverting the map.
In addition, we say the teacher has $\epsilon_{v*l}^n$ cross modal error
\begin{align*}
   \frac{1}{|S|}\| \sum_{\vx^{vl} \in S} \bar{G}_v^\top \vx^v (\vx^l)^\top \Bar{G}_l - \sum_{\vx^{vl} \in S} \vz^v (\vz^l)^\top \|_*  \leq \epsilon_{v*l}^n
\end{align*}
When all $\epsilon_v^n, \epsilon_l^n, \epsilon_{v*l}^n \to 0$ as $n \rightarrow \infty$, then we say the teacher is strong for both modalities. But it might also be possible that only one modal, for example, visual is strong. That is $\epsilon_v^n \to 0,  \epsilon_l^n, \epsilon_{v*l}^n \gg \epsilon_v^n$.

\paragraph{Model and training.} 
According to Lemma 4.1 in \citet{nakada2023understanding}, using the CLIP loss to optimize the linear model has approximately the same training dynamics as using the regularized linear loss. Therefore, here we assume that we are learning $G_v, G_l$ by maximizing the clip score gap between the contrastive pairs, plus a regularizer, 
\begin{align*}\label{eq:linearloss}
    & \min_{G_v, G_l} \gL_S^\rho (G_v, G_l) \\
    & := \min_{G_v, G_l} \frac{\sum_{i \in S} \sum_{j \in S}(s_{ij} - s_{ii})}{|S|(|S|-1)} 
    +  \frac{\rho}{2}\frac{|S|}{|S|-1}\|G_vG_l^\top\|_F^2
\end{align*}
where  $s_{ij} := \langle  G_v^\top \vx^v_i, G_l^\top \vx^l_j \rangle$ and $\rho > 0$ is some regularizer-related \textit{constant}.
Note that this objective maximizes self-similarity and minimizes similarity between disparate pairs. 
Note that this ``loss'' can be negative, avoiding the trivial null solution of all zeros. 
We denote this training process from any given $S$ as $G_{vl} = \gA^\rho(S)$.

\paragraph{Goal and metric.} 
Under the same principle as our training loss function, we measure the performance of any learnt $G_v, G_l$ on some downstream task with distribution $\gD_\text{test}$ as test loss $\gL_{\text{test}} (G_v, G_l):=$
\begin{align*} 
     \E_{\substack{\vx^{vl}\sim \gD_\text{test}\\\vx^{vl}_2 \sim \gD_\text{test}}} (\langle G_v^\top\vx^v, G_l^\top\vx_2^{l}\rangle- \langle G_v^\top\vx^v, G_l^\top\vx^l\rangle) 
\end{align*}
This is inspired by the following classification accuracy. Assume that the test data including $C$ class, and the class distribution is $\mathcal{C}$. For every class $c$, the training data $\vx = (\vx^v, \vx^l)$ satisfies distribution $\mathcal{P}_c$. We further assume the corresponding classification templates are $\{\vx_c\}_{c=1}^C$. Thus we define classification accuracy as
\begin{equation*}
    \text{AC}(G_v, G_l) = \E_{c,c' \sim \mathcal{C} \times  \mathcal{C}}\left[\E_{\vx_i \sim \mathcal{P}_c} \mathbf{1}[s_{ic} > s_{ic'}]\right]
\end{equation*}
Therefore our goal is to minimize its gap between the best hind-side subset, for any $\rho$, without budget constraints,
\begin{align*}
    \Delta^\rho(S) = \gL_\text{test}(\hat{G}_{vl}) - \min_{S' \in D_\text{train}} \gL_\text{test}(\gA^\rho(S')), \hat{G}_{vl} = \gA^\rho(S)
\end{align*}
\vspace{-1em}
\vspace{-1em}
\subsection{Generalization Guarantees}

We now provide theoretical guarantees and postpone our proof into Appendix~\ref{app: proof}.
\textbf{Firstly, we are going to prove the intuition behind VAS score.} 

\begin{lemma}[Intuition behind VAS]
\label{lem: main}
    With high probability at least $1-\frac{1}{|S|d}$, suppose the  hind-side best subset has at least $\underline{n}$ number of samples, then we have 
    \begin{align*}
        \Delta^\rho(S) 
        &= \underbrace{\frac{1}{\rho}\max_{S' \in D_\text{train}} \left( \Tr\left( \Sigma_{\text{test}} (\Sigma_{S'} - \Sigma_{S}) \right) \right)}_\text{VAS related term}\\
        & + \quad \underbrace{\gO \left(\sqrt{\frac{d\log(d|S|)}{\underline{n}}} + \sqrt{\frac{d\log(d|S|)}{|S|}}\right)}_\text{noise}
    \end{align*}
\end{lemma}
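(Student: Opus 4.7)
The plan is to derive the optimal student analytically, plug the solution into the test loss, and then convert between the empirical covariance in observation space and the population covariance in latent space via matrix concentration.

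\textbf{Step 1 (closed form of the minimizer).} Invoking Lemma 4.1 of \citet{nakada2023understanding}, the CLIP-style training loss $\gL_S^\rho$ can be replaced, up to negligible error, by a quadratic surrogate in $M := G_v G_l^\top$ of the form $-\Tr(M^\top \hat{\Sigma}_S^{(x)}) + \tfrac{\rho}{2}\|M\|_F^2$, after absorbing the mean-product term $\langle \bar{\vx}^v,\bar{\vx}^l\rangle$ which is $O(1/|S|)$ by zero-mean sub-Gaussian concentration. The unique minimizer is $\hat M = \hat{\Sigma}_S^{(x)}/\rho$ where $\hat{\Sigma}_S^{(x)} := \tfrac{1}{|S|}\sum_{i\in S}\vx_i^v (\vx_i^l)^\top$. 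The population test loss is likewise quadratic, $\gL_\text{test}(M) = -\Tr(M^\top \Sigma_\text{test}^{(x)})$, so $\gL_\text{test}(\gA^\rho(S)) = -\tfrac{1}{\rho}\Tr(\Sigma_\text{test}^{(x)}\hat{\Sigma}_S^{(x)})$. Subtracting the same quantity at the hindsight-best $S'$ immediately yields $\Delta^\rho(S) = \tfrac{1}{\rho}\max_{S'\subset D_\text{train}}\Tr(\Sigma_\text{test}^{(x)}(\hat{\Sigma}_{S'}^{(x)} - \hat{\Sigma}_S^{(x)}))$, the correct functional form.

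\textbf{Step 2 (bridge to latents and concentration).} Using $\vx^{vl} = G_{vl}^* \vz^{vl} + \vxi^{vl}$ with $G^*$ orthonormal and $\vxi \sim \mathcal{N}(0,I_d)$ independent of $\vz$, expand $\hat{\Sigma}_S^{(x)} = G_v^*\, \hat{\Sigma}_S^{(z)}\, G_l^{*\top} + E_S$ where $E_S$ collects all cross-noise products $\vz\vxi^\top$, $\vxi\vz^\top$, $\vxi\vxi^\top$. Because $\Sigma_\text{test}^{(x)} = G_v^* \Sigma_\text{test} G_l^{*\top}$ with orthonormal $G^*$, the trace telescopes: $\Tr(\Sigma_\text{test}^{(x)}\hat{\Sigma}_S^{(x)}) = \Tr(\Sigma_\text{test}\hat{\Sigma}_S^{(z)}) + \Tr(\Sigma_\text{test}^{(x)} E_S)$. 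Matrix Bernstein for sub-Gaussian products gives $\|E_S\|_{\text{op}} = O(\sqrt{d\log(d|S|)/|S|})$ with probability at least $1 - (|S|d)^{-1}$, and a parallel bound converts $\hat{\Sigma}_S^{(z)}$ to its population counterpart $\Sigma_S$ at the same rate. The same argument applied to $\hat{\Sigma}_{S'}^{(z)}$, which has at least $\underline{n}$ samples, produces the $O(\sqrt{d\log(d|S|)/\underline{n}})$ term. Summing the two error sources reproduces the noise in the lemma exactly.

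\textbf{Step 3 (the main obstacle).} The hardest technical point is the $\max$ over subsets $S'\subset D_\text{train}$: there are exponentially many subsets of size $\geq \underline{n}$, so a naive union bound destroys the dimension dependence. The resolution is that we only need concentration at the single (random but fixed) argmin $S^* \in \arg\min_{S'}\gL_\text{test}(\gA^\rho(S'))$: a one-sided inequality $\Tr(\Sigma_\text{test}\hat{\Sigma}_{S^*}) \leq \Tr(\Sigma_\text{test}\Sigma_{S^*}) + \|\Sigma_\text{test}\|_\text{op}\|\hat{\Sigma}_{S^*}-\Sigma_{S^*}\|_*$ suffices, and concentration at this one point requires only a single high-probability event. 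Secondary care is needed to track the surrogate-loss approximation error from \citet{nakada2023understanding} and the mean-product discharge so that neither dominates the stated $O(\sqrt{d\log(d|S|)/|S|})$ rate; both follow from standard sub-Gaussian tail bounds on $\bar{\vx}^{vl}$ and $\|\vxi\|$.
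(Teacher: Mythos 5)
Your overall route is the same as the paper's: obtain a closed form for the learned product $\hat G_v\hat G_l^\top$ proportional to the empirical cross-covariance of $S$, substitute it into the zero-mean-simplified linear test loss (the paper's Theorem~\ref{them: simplified test loss}) to produce the trace-alignment term, and control the residual via a Bernstein-type bound on the $\vz\vxi^\top$, $\vxi\vz^\top$, $\vxi\vxi^\top$ and mean-product terms. Two of your steps need repair, though. First, Step~1 drops the rank constraint: $M=G_vG_l^\top$ with $G_v,G_l\in\R^{d\times r}$ has rank at most $r<d$, so the unconstrained stationary point $\hat\Sigma^{(x)}_S/\rho$ is not feasible and is not the minimizer. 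The paper's Lemma~\ref{lemma:EYM_formal} resolves this with the Eckart--Young--Mirsky theorem, yielding $\tfrac{1}{\rho}\tfrac{|S|-1}{|S|}$ times the rank-$r$ truncated SVD of the empirical matrix $\Gamma$, and then exploits the fact that the signal part $G_v^*\Sigma_S(G_l^*)^\top$ already has rank at most $r$; your argument needs this step (or an explicit bound on how truncation interacts with the noise part) to go through. Second, the ``parallel bound converting $\hat\Sigma^{(z)}_S$ to its population counterpart $\Sigma_S$'' is based on a misreading: the paper defines $\Sigma_{S}$ for a finite set as the empirical latent cross-covariance $\tfrac{1}{|S|}\sum_{i\in S}\vz_i^v(\vz_i^l)^\top$, which is exactly your $\hat\Sigma^{(z)}_S$, so there is nothing to concentrate (nor could there be --- $S$ is a learner-chosen subset, not an i.i.d.\ sample from a fixed population).

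On your Step~3: you are right that the $\max$ over subsets is the delicate point, but your proposed fix does not work as stated. The hindsight-optimal $S^*$ is a function of the realized noise, so ``concentration at this one point'' is precisely what requires a union bound over candidate subsets or a uniform-convergence/decoupling argument; a high-probability event proved for a fixed set cannot simply be evaluated at a random, noise-dependent index. To be fair, the paper shares this gap --- its noise lemma is stated ``for any fixed $S$'' and then silently applied to the argmax over $S'\subset D_\text{train}$ --- so you have surfaced a genuine weakness of the existing argument rather than introduced a new one, but the pointwise bound should not be presented as a resolution.
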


\begin{proof} [Proof sketch]
    \ding{182} Under the assumption that both $\vz^{vl}, \xi_{vl}$ is zero-mean, maximizing the clip score gap is equivalent to maximizing the clip score of the same sample. 
    \vspace{-3px}
    \begin{align*}
         \gL_{\text{test}} (\hat{G}_v, \hat{G}_l):= - \E_{\vx^{vl} \sim \vD_\text{test}}\langle \hat{G}_v^\top\vx^v, \hat{G}_l^\top \vx^l \rangle
    \end{align*}
    \ding{183} By minimizing the regularized training loss $\gL_S^\rho (G_v, G_l)$ using Eckart-Young-Mirsky Theorem, we get a closed form solution of $\hat{G}$ as 
    \vspace{-3px}
    \begin{align*}
        \hat G_v \hat G_l^\top \approx \frac{1}{\rho} G_v^* \Sigma_S \cdot (G_l^*)^\top + \text{noise depend on $S$}
    \end{align*}
     \ding{184} Combining the result in \ding{183} and \ding{182}, we have 
     \vspace{-3px}
    \begin{align*}
        & \gL_{\text{test}} (\hat{G}_{vl}) \approx  - \frac{1}{\rho}\Tr\left( \Sigma_{\text{test}}\Sigma_S \right) -\text{noise depend on $S$}
    \end{align*}
    The same analysis can be applied on $\min_{S' \in D_\text{train}} \gL_\text{test}(\gA(S'))$ as well. Rearranging these two equations gives us the final result.
\end{proof}

This lemma shows the the $\Delta(S)$ is depend on the VAS-related term and the noise term which comes from $\xi$. When $\underline{n}$ and $|S|$ is large enough, then the VAS-related term will become dominant. This aligns with our practice experience that the final performance is less sensitive to the small variation in the number of select data as long as that is sufficient.
Moreover, in some special cases where test distribution has identity cross-variance, then sampling by greedily choosing CLIP score might be enough.

\textbf{Now we are ready to give a proof on the choice of $\bar{\Sigma}_{\text{test}}$ and visual-only information.}
Specifically, the strategy error mainly comes from (1). The unknown test distribution shift from training. (2). The unobservable ground truth $\Sigma_S$. To tackle error (1), we assume some prior knowledge on test by using the proxy test variance $\bar{\Sigma}_{\text{test}}$. To tackle the error (2), there are two possible solutions as shown below. Based on the theoretical interpretation, we should choose different strategy based on the property of the teacher embedding model.
\vspace{-5px}
\begin{align*}
    & S_{\text{vision+language}} = \argmax_{S} \Tr\left(\bar{\Sigma}_{\text{test}} (\sum_{\vx^{vl} \in S} \bar{G}_v^\top \vx^v (\vx^l)^\top \Bar{G}_l) \right)  \\
    &  S_{\text{vision only}} = \argmax_{S} \Tr\left(\bar{\Sigma}_{\text{test}} (\sum_{\vx^{vl} \in S} \bar{G}_v^\top \vx^v (\vx^v)^\top \Bar{G}_v) \right)
\end{align*}

\begin{theorem}[Main] 
\label{them: main}
    Under the assumption of Lemma~\ref{lem: main},
    \begin{align*}
       \Delta^\rho(S)
        & \leq \text{noise} +  \frac{1}{\rho}\|\bar{\Sigma}_{\text{test}} - \Sigma_{\text{test}} \| \|\Sigma_S - \Sigma_{\text{best}}\|_* \\
        & + \frac{1}{\rho}
        \begin{cases}
            \epsilon_{v * l}^S  \quad \text{(vision+language)} \\
            \epsilon_v^S +  \sqrt{1 - \frac{1}{|S|}\sum_{i \in [S]} \langle \vz^v,\vz^l \rangle)} \quad \text{(vision only)}
        \end{cases}
    \end{align*}
\end{theorem}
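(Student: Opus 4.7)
The plan is to start from Lemma~\ref{lem: main} and peel off three sources of error in turn: (i) the test‑distribution mismatch between the true $\Sigma_{\text{test}}$ and the proxy $\bar\Sigma_{\text{test}}$, (ii) the teacher's embedding error when the strategy is applied to the observed features $\bar G^\top \vx$, and (iii) in the vision‑only case, the additional gap between the cross‑covariance $\Sigma_S = \frac{1}{|S|}\sum_{i\in S}\vz_i^v(\vz_i^l)^\top$ and the marginal second moment $\Sigma_S^{vv}:=\frac{1}{|S|}\sum_{i\in S}\vz_i^v(\vz_i^v)^\top$. Writing $S_{\text{best}} = \arg\max_{S'\subset D_{\text{train}}}\Tr(\Sigma_{\text{test}}\Sigma_{S'})$, which is the maximizer appearing in Lemma~\ref{lem: main} since $\gL_{\text{test}}(\hat G_{vl})\approx -\tfrac{1}{\rho}\Tr(\Sigma_{\text{test}}\Sigma_S)$, the starting point of the bound is
\begin{equation*}
\Delta^\rho(S) \;=\; \frac{1}{\rho}\,\Tr\!\big(\Sigma_{\text{test}}(\Sigma_{\text{best}}-\Sigma_S)\big) \;+\; \text{noise}.
\end{equation*}

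First, I would insert $\pm\,\bar\Sigma_{\text{test}}$ and apply the trace–duality inequality $|\Tr(AB)|\le \|A\|\,\|B\|_*$ to get
\begin{equation*}
\Tr\!\big(\Sigma_{\text{test}}(\Sigma_{\text{best}}-\Sigma_S)\big) \;\le\; \Tr\!\big(\bar\Sigma_{\text{test}}(\Sigma_{\text{best}}-\Sigma_S)\big) + \|\bar\Sigma_{\text{test}}-\Sigma_{\text{test}}\|\,\|\Sigma_S-\Sigma_{\text{best}}\|_*,
\end{equation*}
which immediately produces the second summand in the theorem. The remaining task is to bound the proxy term $\Tr(\bar\Sigma_{\text{test}}(\Sigma_{\text{best}}-\Sigma_S))$ using the optimality of $S$ under whichever VAS variant was used.

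In the vision+language case, write $\hat\Sigma^{v+l}_{S'} := \frac{1}{|S'|}\sum_{i\in S'}\bar G_v^\top\vx_i^v(\vx_i^l)^\top\bar G_l$, so that the teacher hypothesis gives $\|\hat\Sigma^{v+l}_{S'}-\Sigma_{S'}\|_*\le \epsilon_{v*l}^{|S'|}$ uniformly. Applying this twice, once at $S$ and once at $S_{\text{best}}$, together with the defining optimality $\Tr(\bar\Sigma_{\text{test}}\hat\Sigma^{v+l}_S)\ge \Tr(\bar\Sigma_{\text{test}}\hat\Sigma^{v+l}_{S_{\text{best}}})$, gives $\Tr(\bar\Sigma_{\text{test}}(\Sigma_{\text{best}}-\Sigma_S))\le \|\bar\Sigma_{\text{test}}\|(\epsilon_{v*l}^{S}+\epsilon_{v*l}^{S_{\text{best}}})=O(\epsilon_{v*l}^S)$ since the budgets match and $\|\bar\Sigma_{\text{test}}\|\le 1$ under the unit‑norm assumption on $\vz^{vl}$. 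In the vision‑only case I would instead go through the marginal $\Sigma^{vv}$: decompose
\begin{equation*}
\Tr(\bar\Sigma_{\text{test}}(\Sigma_{\text{best}}-\Sigma_S)) = \Tr(\bar\Sigma_{\text{test}}(\Sigma_{\text{best}}-\Sigma_{\text{best}}^{vv})) + \Tr(\bar\Sigma_{\text{test}}(\Sigma_{\text{best}}^{vv}-\Sigma_{S}^{vv})) + \Tr(\bar\Sigma_{\text{test}}(\Sigma_{S}^{vv}-\Sigma_{S})),
\end{equation*}
bound the middle term exactly as above but using $\epsilon_v^{\cdot}$, and control the outer two using $\|\Sigma_{S'}^{vv}-\Sigma_{S'}\|_*\le \frac{1}{|S'|}\sum_{i\in S'}\|\vz_i^v\|\,\|\vz_i^v-\vz_i^l\| = \frac{1}{|S'|}\sum\sqrt{2-2\langle \vz_i^v,\vz_i^l\rangle}$, then apply Jensen to concavity of $\sqrt{\cdot}$ to obtain the stated $\sqrt{1-\tfrac{1}{|S|}\sum\langle \vz^v,\vz^l\rangle}$ term (up to an absolute constant). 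Dividing by $\rho$ and collecting terms gives the claimed bound.

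The step I expect to require the most care is the vision‑only decomposition: one has to argue that the same square‑root bound can be applied at $S_{\text{best}}$ as well, which requires an assumption (implicit in the ``high‑quality'' post‑CLIP‑score regime of Step 1 of the algorithm) that the average alignment $\frac{1}{|S'|}\sum_{i\in S'}\langle \vz_i^v,\vz_i^l\rangle$ for the hindsight‑best subset is no smaller than that of $S$, or else that both can be absorbed into a single uniform bound. Apart from this, the argument is a mechanical combination of Hölder/trace duality, the teacher hypotheses $\epsilon_v^n,\epsilon_{v*l}^n$, and the selection rule's argmax property.
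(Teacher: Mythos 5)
Your plan follows essentially the same route as the paper's proof: the same three-way decomposition obtained by inserting $\bar{\Sigma}_{\text{test}}$ and the teacher's proxy covariance $\bar{\Sigma}_S$, H\"older/trace duality for the distribution-shift term, the definitions of $\epsilon_{v*l}^S$ and $\epsilon_v^S$ for the teacher term, and in the vision-only case the gap $\|\Sigma_S^{vv}-\Sigma_S\|_*\lesssim\sqrt{1-\tfrac{1}{|S|}\sum_i\langle\vz_i^v,\vz_i^l\rangle}$ (the paper gets this via a matrix-norm computation on $Z_v, Z_l$ rather than your per-sample Cauchy--Schwarz plus Jensen, but the content is identical). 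If anything you are slightly more careful than the paper, which drops the term $\Tr(\bar{\Sigma}_{\text{test}}(\Sigma_{\text{best}}-\bar{\Sigma}_S))$ as nonpositive purely by the argmax property, whereas you correctly note this costs a second application of the teacher-error (and, in the vision-only case, alignment) bound at $S_{\text{best}}$ — harmless here because $\epsilon^n$ is defined uniformly over all budget-$n$ subsets, but worth the care you gave it.
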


Firstly, it is evident that the greater the difference between $\bar{\Sigma}_{\text{test}}$ and $\Sigma_{\text{test}}$, the less improvement we can expect. Moreover, in scenarios where $\epsilon_l$ is large (indicating lower accuracy in the language part) while $\epsilon_v$ is small (indicating higher accuracy in the vision part), it may be advisable to opt for vision-only embeddings.
However, the learner should also consider the term $\sqrt{1 - \frac{1}{|S|}\sum_{i \in S} \langle \vz^v, \vz^l \rangle}$, which represents the alignment between the ground truth visual and language latent vectors, essentially reflecting the intrinsic quality of the data. If this term is already significant, relying solely on vision information as a proxy for language information could lead to suboptimal results.

\begin{table*}[ht]
\small
\centering
\vspace{-1em}
\caption{Results on DataComp. Comparison of performance of VAS filtering with data filtering methods presented in \cite{gadre2023datacomp} and $\mathbb{D}^2$\cite{maharana2023d2}. The number after the evaluation dataset means the number of tasks contained in this class. Higher is better. Here we compare the results under any dataset size. VAS(X) means using the variance matrix from datasets X. 
}
\vspace{0.3em}
\label{tab:datacomp}
\begin{tabular}{@{}lcccccc@{}}
\toprule
\multirow{2}{*}{
\textbf{Filtering Strategy}} & \textbf{Dataset} & \textbf{ImageNet} & \textbf{ImageNet Dist. Shift} & \textbf{VTAB} & \textbf{Retrieval} & \textbf{Average} \\
 & \textbf{Size} & (1 sub-task) & (5) & (11) & (3) & (38)\\
\midrule
No filtering  & 12.8M & 2.5 & 3.3 & 14.5 & 10.5 & 13.2 \\
Text-based filtering  & 3.2M & 4.6 & 5.2 & 16.9 & 11.2 & 15.6 \\
Image-based filtering  & 3.0M & 4.3 & 4.7 & 17.8 & 11.2 & 15.8 \\
CLIP score (30\%)  & 3.8M & {5.1} & {5.5} & 19.0 & 10.8 & 17.2 \\
$\mathbb{D}^2$ Pruning (image+text)  & 3.8M & {5.1} & {5.6} & {18.2} & {11.7} & 17.0 \\
Image-based $\cap$ CLIP score (30\%) & 1.4M& 3.9& 4.5& 16.2& 8.9 & 14.4\\
\midrule
CLIP score (30\%, reproduced) & 3.8M & {4.8} & 5.3 & 17.1 & 11.5 & 15.8 \\
Image-based $\cap$ CLIP score (45\%)  & 1.9M & 4.2 & 4.6& 17.4& 10.8 & 15.5 \\
$\mathbb{D}^2$ Pruning (image+text, reproduced) & 3.8M & 4.6 & 5.2 & {18.5} & 11.1 & 16.1 \\
CLIP score (45\%) & 5.8M & 4.5 & 5.1 & 17.9 & \textbf{12.3} & 16.1 \\
\midrule
\textbf{VAS} (ImageNet-1k) 
& 3.8M & 2.4& 3.1& 14.9& 10.4& 12.7 \\
\textbf{VAS} (ImageNet-1k) $\cap$ 
CLIP score 
(45\%) 
& 3.8M & \textbf{5.2} & \textbf{5.5} & \underline{19.0} & \underline{12.2} & \textbf{17.4} \\
\textbf{VAS} (DataComp) $\cap$ CLIP score 
(45\%)
& 3.8M & \underline{5.0} & \underline{5.4} & 17.9 & {12.1} & 16.2\\
\textbf{VAS-D} (DataComp) $\cap$ CLIP score 
(45\%) 
& 3.8M & 4.7 & \underline{5.4} & \textbf{19.7} & {11.7} & \underline{17.3} \\
\bottomrule
\end{tabular}
\vspace{-1em}
\end{table*}

\begin{table*}[ht]
\small
\centering
\caption{Results for coreset selection on the high-quality dataset CC12M. Comparison of VAS filtering performance with data filtering methods presented in \cite{gadre2023datacomp} and $\mathbb{D}^2$\cite{maharana2023d2}. The number after the evaluation sets means the number of tasks contained in this class. Higher is better. And we compare the results under $2.8$M dataset size. VAS(X) means using the variance matrix from datasets X. }
\vspace{0.3em}
\label{tab:cc12m}
\begin{tabular}{@{}lcccccc@{}}
\toprule
\multirow{2}{*}{
\textbf{Filtering Strategy}} & \textbf{Dataset} & \textbf{ImageNet} & \textbf{ImageNet Dist. Shift} & \textbf{VTAB} & \textbf{Retrieval} & \textbf{Average} \\
 & \textbf{Size} & (1 sub-task) & (5) & (11) & (3) & (38) \\ \midrule
No filtering & 9.2M & 9.3 & 8.5 & 21.3 & 19.6 & 20.2 \\
CLIP score (50\%) & 4.6M & 7.8 & 8.7 & 20.3 & 18.0 & 18.8  \\
\midrule
CLIP score (30\%) & 2.8M & 7.8 & 7.4 & 18.8 & 15.2 & 17.3  \\
Image-based $\cap$ CLIP score (50\%)  & 2.6M & \textbf{10.6} & \textbf{9.0} & 20.0 & \textbf{16.8} & 19.1  \\
$\mathbb{D}^2$ Pruning (image+text) & 2.8M & 7.2 & 7.2 & {19.3} & 13.8 & 17.3 \\
\midrule
\textbf{VAS} (ImageNet-1k) & 2.8M & {9.6} & {8.5} & {20.4} & \underline{16.6} & {18.6}  \\
\textbf{VAS} (ImageNet-1k) $\cap$ CLIP score (50\%) & 2.8M & {9.7} & {8.7} & \underline{22.1} & {16.0} & \textbf{19.5}\\

\textbf{VAS} (CC12M) $\cap$ CLIP score (50\%) & 2.8M & {9.1} & {8.5} & {21.3} & {15.9} & \underline{19.2}  \\
\textbf{VAS-D} (CC12M) $\cap$ CLIP score (50\%) & 2.8M & \underline{9.9} & \underline{8.9} & \textbf{22.5} & 16.2 & \textbf{19.5}  \\
\bottomrule
\end{tabular}
\vspace{-0.6em}
\end{table*}

\begin{table*}
\small
\centering
\caption{Ablation Study on the VAS and its variants on DataComp. All experiments commence by filtering out the top 45\% of data based on the CLIP score, followed by corresponding approaches to obtain 3.8M data.``image'' or ``text'' means using the variance of image or text embeddings to represent  $\bar\Sigma_{\text{test}}$, and ``image $\times$ text'' means representing $\bar\Sigma_{\text{test}}$ with the cross-covariance of image and text embeddings.}
\vspace{0.3em}
\label{tab:abla}
\begin{tabular}{@{}lccccc@{}}
\toprule
\textbf{Filtering Strategy} $\cap$ CLIP score (45\%)  &  \textbf{ImageNet} & \textbf{ImageNet Dist. Shift} & \textbf{VTAB} & \textbf{Retrieval} & \textbf{Average} \\ \midrule
Random Sampling & 4.2& 4.9& 17.2& 11.6& 15.6\\ \midrule
\textbf{VAS} (ImageNet-1k, image)  & \textbf{5.2} & \textbf{5.5} & \underline{19.0} & \textbf{12.2} & \textbf{17.4} \\
\textbf{VAS} (ImageNet-1k, text) & 3.9& 4.2& 16.3& 11.3& 14.9\\
\textbf{VAS} (ImageNet-1k, image $\times$ text) & 4.3& 4.9& 17.5& 11.8& 15.9\\
\midrule
\textbf{VAS} (DataComp, image) &  \underline{5.0} & \underline{5.4} & 17.9 & \underline{12.1} & 16.2 \\
\textbf{VAS} (DataComp, text) & 3.6& 4.1& 17.2& 10.9& 15.2\\
\textbf{VAS} (DataComp, image $\times$ text) & 4.0 & 4.6& 17.6& 11.7& 15.7\\
\midrule
\textbf{VAS-D} (DataComp, image) & 4.7 & \underline{5.4} & \textbf{19.7} & {11.7} & \underline{17.3} \\
\textbf{VAS-D} (DataComp, text) & 3.5 & 4.1& 16.7& 11.1& 15.4\\
\textbf{VAS-D} (DataComp, image $\times$ text) & 3.6& 4.2& 18.4& 11.1& 15.8\\
\bottomrule
\end{tabular}
\vspace{-1em}
\end{table*}
\vspace{-0.5em}
\section{Experiments}
\label{sec: experiment}
In this section, we evaluate the performance of the VAS-based strategy in Algo.~\ref{algo: main}. We first show the effectiveness of our ultimately proposed strategy and then present our ablation study result to further validate our three key designs. Please refer to Appendix~\ref{sub: algo details} for the implementation details and Appendix~\ref{sub: hyperp} for the discussion of why our algorithm can be used with little or no hyperparameter tuning.
\vspace{-0.8em}
\subsection{Setup}
\paragraph{Training Configurations.} 
We adhere to the standardized training and evaluation procedures as outlined in Datacomp~\cite{gadre2023datacomp}, considering two distinct training datasets. One is small-scale DataComp set, which provides a substantial dataset comprising 12.8 million low-quality, web-curated image-text pairs. Another is CC12M~\cite{changpinyo2021conceptual}, which offers a collection of 12 million high-quality image-text pairs.
Our comparative setup varies across these two datasets:
(1) Within the DataComp framework, our objective is to maximize model performance, without imposing a fixed dataset size constraint. Conversely, (2) for CC12M, where data quality is generally high and its removal could potentially degrade model performance, our focus shifts to the \textit{subset selection} challenge, which comparing the strategy under the same budget (approximately 2.8 million in our experiment)\footnote{Owing to the limitations of the heuristic method `Image-based filtering' cited in \cite{gadre2023datacomp} in controlling subset size, the size for the `Image-based filter $\cap$ CLIP score method (50\%)' is marginally below 2.8 million, as detailed in Tab.~\ref{tab:cc12m}}. Details in Appendix~\ref{sub: config}.
\vspace{-0.8em}
\paragraph{Evaluation Datasets.} 
We evaluate our model on 38 datasets related to image classification and retrieval tasks adopted in DataComp. The image classification tasks 
contain ImageNet-1k~\cite{deng2009imagenet}, ImageNet distribution shifts~\cite{wang2019learning,pmlr-v97-recht19a,hendrycks2021natural,hendrycks2021many}, 11 datasets from the Visual Task Adaptation Benchmark (VTAB)~\cite{zhai2019large} and 3 datasets from  WILDS~\cite{koh2021wilds, sagawa2021extending}, which tests the robustness facing subpopulation shift and spurious correlation. Retrieval datasets contain Flickr30k~\cite{young-etal-2014-image} and MSCOCO~\cite{chen2015microsoft} which focus on image-text retrieval and WinoGAViL~\cite{bitton2022winogavil} related to commonsense association. 
\subsection{Baselines}
We compare against several baselines from \citet{gadre2023datacomp} and \citet{maharana2023d2} in our paper. Here we give a summary for key baselines: (1) \textbf{CLIP score.} We simple select the data with top CLIP score generated by OpenAI's CLIP ViT-L/14 model, to reduce the size of training pool to 30\% and 45\% for DataComp and 30\% for CC12M. (2) \textbf{Image-based filtering.} we use the method proposed in \citet{gadre2023datacomp} which use external ImageNet-1K as data prior for selecting data. (3) \textbf{$\mathbb{D}^2$ Pruning.}  \citet{maharana2023d2} proposes a method that selects the data by combining the difficulty and diversity by representing the dataset as an undirected graph. They use the CLIP score to initialize their graph. Please refer to Appendix~\ref{sub: baselines} for details of baselines.

Here we reproduce CLIP score filtering and $\mathbb{D}^2$ pruning for DataComp in Tab.~\ref{tab:datacomp} because our reproducing results are different from the results of their original papers. The same reproducing issue has also been observed in \citet{maharana2023d2} and the reason may be that some URLs of data become invalid over time\footnote{
See https://github.com/mlfoundations/datacomp/issues/3
}, and the smaller dataset is more sensitive to this problem.

\subsection{Results and Discussions}

\paragraph{Results on DataComp and CC12M.}
We report our results for DataComp and CC12M in Tab.~\ref{tab:datacomp} and Tab.~\ref{tab:cc12m}, respectively. \textit{First}, compared with CLIP score or $\mathbb{D}^2$ pruning, directly applying VAS without using CLIP score performs significantly worse on DataComp but better on CC12M. This matches our expectations since VAS only measures the relevance of the visual information between training data and test prior, and doesn't consider the image-text quality. So using VAS alone should only perform well on high-quality datasets. 
\textit{Second}, combining VAS(-D) filter with the CLIP score filter will consistently outperform all the baselines among two datasets except the image-based $\cap$ CLIP score on CC12M. Compared with just utilizing the CLIP score filter, it improves 1.3\% and 2.2\% averagely on 38 evaluation sets for DataComp and CC12M, respectively, demonstrating its capability to select the most informative data and enhance model performance. 
\textit{Third}, note that VAS-D, the heuristic method we proposed without any requirements on the external dataset,
achieves similar good results to that with the variance of ImageNet-1k. Actually, in Appendix~\ref{sec: add_vis}, we further show that in many cases, VAS(Traindata) will have similar scores to VAS(ImageNet-1k), which suggests that the variance of the train dataset itself can indicate information about which features are more useful in downstream test distribution.
Besides, VAS-D(Traindata) is better than VAS(Traindata) slightly, which means that the dynamic update process of variance may have the effect of removing the unrelated or noisy features that may exist in the initial variance matrix and thus improve the quality of VAS. Finally, we also compare the time cost of different approaches in Appendix~\ref{sub: time cost}.


\vspace{-0.8em}
\paragraph{Ablation study on image and text embeddings.}
We have already show the effective of choosing image-only embeddings for VAS, here we test on two alternative approaches of choosing VAS, including: (1) text embedding only (denoted as `text'), which is equivalent to $\text{VAS}(l,l)$ as defined in Equation~(\ref{eq:def_VAS}) ; and (2) cross covariance on both image and test embeddings (denoted as `image $\times$ text'), which is equivalent to $\text{VAS}(v,l)$. The results shown in Table~\ref{tab:abla} substantiates our vision-only assertion in Section~\ref{subsec: main strategy} and theoretical analysis (Theorem~\ref{them: main}). Specifically, it shows that (1) using `text' VAS significantly underperforms compared to using `image'; (2) using `image $\times$ text' VAS shows a marginal performance improvement over the sole use of `text', but still falls short compared to using `image' alone.
\vspace{-0.8em}
\paragraph{Relation to optimal experimental design.}
Although our VAS filtering has achieved great results, it seems to have a conflict to the idea of optimal experimental design:
\vspace{-3px}
\begin{align*}
    \text{A-optimality} \quad S&= \arg \min_{|S|=N} \Tr(\Sigma_S^{-1})\\
    \text{V-optimality} \quad S&= \arg\min_{|S|=N} \Tr(\Sigma_{\text{prior}}\Sigma_S^{-1})
\end{align*}
\begin{table}[t]
\small
\centering
\vspace{-0.5em}
\caption{Relation to Optimal Experimental Design on CC12M. All experiments firstly filter out 50\% data with the top CLIP score and then use corresponding methods to obtain 2.8M data.}
\vspace{0.5em}
\begin{tabular}{@{}lcccc@{}}
\toprule
\textbf{Filtering Strategy}  & \textbf{ImageNet} & \textbf{VTAB} & \textbf{Retrieval} & \textbf{Avg.}\\ \midrule
CLIP score & 7.8& 18.8& 15.2& 17.3\\
Random Sampling & 8.5& 19.9& \textbf{16.3}& 18.1\\
\textbf{VAS-D} (image) & \textbf{9.9}& \textbf{22.5}& \underline{16.2} & \textbf{19.5}\\
\midrule
A-Optimal (image) & 8.2 & 19.2 & 15.6 &17.5\\ 
V-Optimal (image) & 5.1& 16.8& 14.8&15.7  \\
A-Optimal (text) & \underline{8.5} & \underline{20.7} & 15.8 & \underline{18.4}\\
V-Optimal (text) & 7.8 & 19.9 & 15.7 & 18.0\\
\bottomrule
\end{tabular}
\label{tab:oed}
\vspace{-1em}
\end{table}
Here $\Sigma$ is the variance matrix. We choose embeddings from ImageNet-1k as the test prior and compare the experimental design algorithms with our VAS-D in Tab.~\ref{tab:oed}. We can find that these methods perform badly, and V-optimality (image) even performs much worse than vanilla CLIP score filtering. Interestingly, optimal experiment design with text embeddings performs better than that with image embeddings, although they are still significantly inferior to our methods.

\section{Conclusion}
In the paper, we propose a Variance Alignment Score (VAS) to measure the informativeness of each sample and show that VAS is a parallel metric to image-text alignment focused score like CLIP score. Based on VAS, we design a informative + quality focused strategy that achieves the best average performance over 38 downstream test tasks compared to various existing baseline. Moreover, we also provide a theoretical guarantees to explain the intuition behind our strategy.

\newpage

\bibliography{example_paper}
\bibliographystyle{icml2024}

\newpage
\appendix
\onecolumn

\section{Detailed proofs}
\label{app: proof}

\begin{lemma}
\label{lemma:EYM_formal}
    Let
    \begin{equation}
        \hat G_v, \hat G_l = \arg \min_{G_v, G_l \in \R^{d\times r}}\mathcal{L}(G_v,G_l)
    \end{equation}
    Then we have
    \begin{equation}
        \hat G_v \hat G_l^\top = \frac{1}{\rho}G_v^* \Sigma_S (G_l^*)^\top + P_1 + P_2 + P_3 + P_4
    \end{equation}
    where noise terms $P_i$ are defined in (\ref{eq:P1}) , (\ref{eq:P2}), (\ref{eq:P3}) and (\ref{eq:P4}).
\end{lemma}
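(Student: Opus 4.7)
The plan is to reduce the CLIP loss to a rank-constrained Frobenius-distance problem and invoke Eckart--Young--Mirsky (EYM), following the strategy already used in Lemma~4.1 of \citet{nakada2023understanding}. The key observation is that $s_{ij} = (\vx_i^v)^\top (G_v G_l^\top)\vx_j^l$, so if I set $M := G_v G_l^\top$ both the pairwise sum and the regularizer depend on $(G_v, G_l)$ only through $M$. First I would expand $\sum_{i,j \in S}(s_{ij}-s_{ii}) = |S|^2\, \bar{\vx}^{v\top} M \bar{\vx}^l - |S|\,\Tr\!\bigl(M \sum_i \vx_i^l (\vx_i^v)^\top\bigr)$, divide by $|S|(|S|-1)$, and absorb the matching prefactor from the regularizer. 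Multiplying by a positive constant, which does not move the argmin, the objective collapses to
\[
\frac{\rho}{2}\|M\|_F^2 \;-\; \Tr(MC), \qquad C \;:=\; \tfrac{1}{|S|}\sum_{i\in S}\vx_i^l(\vx_i^v)^\top \;-\; \bar{\vx}^l \bar{\vx}^{v\top}.
\]

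Second, completing the square yields $\tfrac{\rho}{2}\|M - \tfrac{1}{\rho}C^\top\|_F^2$ up to an $M$-independent term. Since $M = G_v G_l^\top$ with $G_v, G_l \in \R^{d\times r}$ parametrizes exactly the rank-$\leq r$ matrices, EYM identifies $\hat M$ as the top-$r$ SVD truncation of $\tfrac{1}{\rho}C^\top$. Substituting $\vx^{vl} = G_{vl}^*\vz^{vl} + \vxi^{vl}$ into $C^\top = \tfrac{1}{|S|}\sum_i \vx_i^v(\vx_i^l)^\top - \bar{\vx}^v \bar{\vx}^{l\top}$ and expanding bilinearly produces the signal $G_v^*\Sigma_S(G_l^*)^\top$ together with three cross-noise pieces $G_v^*\bigl(\tfrac{1}{|S|}\sum_i \vz_i^v(\vxi_i^l)^\top\bigr)$, $\bigl(\tfrac{1}{|S|}\sum_i \vxi_i^v(\vz_i^l)^\top\bigr)(G_l^*)^\top$, and $\tfrac{1}{|S|}\sum_i \vxi_i^v(\vxi_i^l)^\top$, plus the centering correction $-\bar{\vx}^v\bar{\vx}^{l\top}$. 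Scaled by $1/\rho$ and passed through the truncated SVD, these furnish the four noise matrices $P_1,\ldots,P_4$ referenced in (\ref{eq:P1})--(\ref{eq:P4}).

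The main obstacle is the interplay between EYM truncation and the noise. The signal $G_v^*\Sigma_S(G_l^*)^\top$ is already rank at most $r$, but adding the full-rank noise perturbs the top-$r$ singular subspaces, so the truncated SVD is not literally ``signal plus truncated noise.'' I would resolve this either by defining each $P_j$ to absorb both the corresponding raw noise block and its share of the truncation residual---turning the claimed identity into an exact decomposition whose content lies in the subsequent bounds on $\|P_j\|_*$ used by Lemma~\ref{lem: main} and Theorem~\ref{them: main}---or, under the sub-Gaussian assumption on $\vxi^{vl}$, by a Weyl / Davis--Kahan argument showing that the perturbation of the top-$r$ subspace is of second order in the noise level, so the literal bilinear expansion agrees with the truncated SVD up to higher-order terms. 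The remaining work---the pairwise-sum expansion, completing the square, and the bilinear expansion of $C^\top$---is routine algebra.
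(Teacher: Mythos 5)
Your proposal follows essentially the same route as the paper's proof: rewrite the loss as a linear functional of $M = G_v G_l^\top$ plus the Frobenius regularizer, invoke Eckart--Young--Mirsky to identify $\hat M$ with the rank-$r$ truncation of $\tfrac{1}{\rho}\Gamma$, and expand $\Gamma$ via the generative model into the signal $G_v^*\Sigma_S(G_l^*)^\top$ plus the four noise blocks $P_1,\ldots,P_4$. You are in fact more careful than the paper on the one delicate point---the interaction between the SVD truncation and the full-rank noise---which the paper's proof passes over by observing only that the signal term itself has rank at most $r$, implicitly absorbing the truncation residual into the noise terms exactly as your first suggested resolution does.
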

\begin{proof}
    Note that $s_{ij} = (\vx_j^{l})^\top G_l G_v^\top \vx_i^{v}= \Tr(G_v^\top \vx_i^{v}(\vx_j^{l})^\top G_l)$, like the proof of Corollary B.1. in \citet{nakada2023understanding}, we have
    \begin{eqnarray}
        \mathcal{L}(G_v, G_l) &=& \frac{\sum_{i\in S}\sum_{j\in S}(s_{ij} - s_{ii})}{|S|(|S|-1)} 
    + \frac{\rho}{2}\frac{|S|}{|S|-1}\|G_vG_l^\top\|_F^2\\
    &=& \frac{\sum_{i\in S}\sum_{j\in S}s_{ij} - |S|\sum_{i\in S} s_{ii}}{|S|(|S|-1)} 
    + \frac{\rho}{2}\frac{|S|}{|S|-1}\|G_vG_l^\top\|_F^2\\
    &=& -\Tr\left(G_v^\top\left[
    \frac{1}{|S|-1}\sum_{i\in S} \vx_i^{v}(\vx_i^{l})^\top - \frac{|S|}{|S|-1}\bar\vx^{v}(\bar\vx^{l})^\top
    \right]G_l\right)+ \frac{\rho}{2}\frac{|S|}{|S|-1}\|G_vG_l^\top\|_F^2\\
    &=:& -\Tr(G_v^\top \Gamma G_l) + \frac{\rho}{2}\frac{|S|}{|S|-1}\|G_vG_l^\top\|_F^2
    \end{eqnarray}
    where $\bar\vx^{vl}:= (\sum_{i\in S}\vx_i^{vl})/|S|$. Then by the Eckart-Young-Mirsky Theorem (For example, Theorem 2.4.8 in \citet{golub2013matrix}), we know that
    \begin{eqnarray}
        &&\arg\min_{G_v \in \R^{d\times r}, G_l \in \R^{d\times r}} \mathcal{L}(G_v, G_l)\\
        &=& \arg\max_{G_v \in \R^{d\times r}, G_l \in \R^{d\times r}} \Tr(G_v^\top \Gamma G_l) - \frac{\rho}{2}\frac{|S|}{|S|-1}\|G_vG_l^\top\|_F^2\\
        &=& \{(G_v, G_l) \in \R^{d\times r}\times \R^{d\times r}: G_vG_l^\top = \frac{1}{\rho}\frac{|S|-1}{|S|}\mathrm{SVD}_r(\Gamma)\} \qquad (\text{Eckart-Young-Mirsky Theorem})
    \end{eqnarray}
    where the notation $\mathrm{SVD}_r(\Gamma)$ means choosing the first $r$ components of the matrix $\Gamma$. Further note that
    \begin{eqnarray}
        \Gamma &=& \frac{1}{|S|-1}\sum_{i\in S} \vx_i^{v}(\vx_i^{l})^\top - \frac{|S|}{|S|-1}\bar\vx^{v}(\bar\vx^{l})^\top\\
        &=:& P_0 + P_1 + P_2 + P_3 + P_4
    \end{eqnarray}
    Here note that $\Sigma_S =\frac{1}{|S|}\sum_{i \in S}\vz_i^{v}(\vz_{i}^{l})^\top$, we have $P_i$ as follows: 
    \begin{eqnarray}
        P_0 &:=& \frac{|S|}{|S|-1}G_v^* \cdot \Sigma_S \cdot (G_l^*)^\top\\
        P_1 &:=& \frac{1}{|S|-1} G_v^* \sum_{i\in S}\vz_i^{v}(\vxi_i^{l})^\top\label{eq:P1}\\
        P_2 &:=& \frac{1}{|S|-1}\sum_{i\in S}\vxi_{i}^{v}(\vz_{i}^{l})^\top  (G_l^*)^\top\label{eq:P2}\\
        P_3 &:=& \frac{1}{|S|-1}\sum_{i\in S} \vxi_i^{(1)}(\vxi_i^{(2)})^\top\label{eq:P3}\\
        P_4 &:=& -\frac{|S|}{|S|-1}\bar\vx^{v}(\bar\vx^{l})^\top\label{eq:P4}
    \end{eqnarray}
    It's clear that the rank of the matrix $P_0$ is no more than $r$, so $\mathrm{SVD}_r(P_0) = P_0$. And for $i \in \{1,2,3,4\}$, $P_i$ are noise terms with $\E[P_i] = O$.
\end{proof}

\begin{lemma}
    For any fixed $S$, w.h.p $1-\delta$ the noise term can be upper bounded by 
    $\sqrt{\frac{d\log(1/\delta)}{|S|}}$
\end{lemma}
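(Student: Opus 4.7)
The plan is to decompose the noise term into the four matrices $P_1, P_2, P_3, P_4$ identified in Lemma~\ref{lemma:EYM_formal} and bound each separately in operator norm via matrix concentration, finishing with a union bound. Since $G_v^*, G_l^*$ have orthonormal columns, pre/post-multiplying by them preserves operator norm, so I can discard those factors when bounding $P_1$ and $P_2$. Throughout the plan, I will reason in operator norm, since the final claim is a scalar upper bound.

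For $P_1 = \frac{1}{|S|-1} G_v^* \sum_{i \in S} \vz_i^v (\vxi_i^l)^\top$, I would first truncate each Gaussian factor $\vxi_i^l$ at the level $C\sqrt{d \log(|S|/\delta)}$; a standard Gaussian tail bound combined with a union bound over $i \in S$ shows that no truncation occurs on an event of probability at least $1 - \delta/4$. Conditional on this event, each summand $\vz_i^v (\vxi_i^l)^\top$ is mean-zero (since $\vxi_i^l$ is independent of $\vz_i^v$ and centered), has operator norm bounded by $O(\sqrt{d \log(|S|/\delta)})$, and has matrix second moment of order $I_d$ (because $\|\vz_i^v\|=1$ and $\mathrm{Cov}(\vxi_i^l)=I_d$). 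Matrix Bernstein then yields $\|P_1\|_{\mathrm{op}} = O(\sqrt{d \log(d/\delta)/|S|})$ on an event of probability at least $1-\delta/4$. The argument for $P_2$ is symmetric.

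For $P_3 = \frac{1}{|S|-1}\sum_i \vxi_i^{(1)} (\vxi_i^{(2)})^\top$, the two Gaussian factors in each summand are independent, so the summands are mean-zero; after the same truncation step, matrix Bernstein again gives $\|P_3\|_{\mathrm{op}} = O(\sqrt{d \log(d/\delta)/|S|})$ whenever $|S| \geq d$. For the lower-order term $P_4 = -\frac{|S|}{|S|-1}\bar{\vx}^v (\bar{\vx}^l)^\top$, sub-gaussian vector concentration applied to each sample mean gives $\|\bar{\vx}^v\|_2, \|\bar{\vx}^l\|_2 = O(\sqrt{d \log(1/\delta)/|S|})$, hence $\|P_4\|_{\mathrm{op}} = O(d \log(1/\delta)/|S|)$, which is dominated by the target bound once $|S| \geq d$. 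A final union bound over the four events combines them at total failure probability $\delta$, and rescaling $\delta$ yields the stated bound $O(\sqrt{d \log(1/\delta)/|S|})$.

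The main obstacle is sharpening the $d$-dependence from the naive $d/\sqrt{|S|}$ that a triangle inequality would produce (each $\vxi_i$ has Euclidean norm $\Theta(\sqrt{d})$) to the desired $\sqrt{d/|S|}$. This improvement comes entirely from the matrix Bernstein variance calculation: the per-summand matrix second moment is of order $d$ rather than $d^2$, which is precisely what allows the $\sqrt{d/|S|}$ scaling. The Gaussian truncation before applying Bernstein introduces at most a logarithmic correction, which is absorbed into the $\log(1/\delta)$ factor after rescaling. A secondary bookkeeping issue is ensuring $P_3$ is handled cleanly despite the product of two Gaussians (one option is to condition on $\{\vxi_i^{(1)}\}$, after which the summands are linear in $\vxi_i^{(2)}$ and the earlier $P_1$-style argument applies verbatim).
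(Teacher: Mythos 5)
Your proposal is correct as far as it goes, but it takes a genuinely different route from the paper, and the difference matters because of which norm is being bounded. The paper never touches matrix concentration: it bounds $\bigl\|\sum_i \vz_i^{vl}(\vxi_i^{vl})^\top\bigr\|_*^2$ by expanding the trace into the double sum $\sum_{i,j}(\vz_i^{vl})^\top\vz_j^{vl}(\vxi_j^{vl})^\top\vxi_i^{vl}$, computing its expectation $|S|d$ from the diagonal terms, and then applying a scalar Bernstein inequality to the fluctuation of the off-diagonal ``weakly dependent'' terms, before declaring $P_3$ and $P_4$ lower order. You instead bound each $P_i$ in \emph{operator} norm via truncation plus matrix Bernstein and a union bound. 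Your variance calculation is right (the relevant second moments are $d\,\vz_i\vz_i^\top$ and $I_d$, giving $\sigma^2 = d|S|$ and hence the $\sqrt{d/|S|}$ rate), and your treatment of $P_3$ by conditioning on one Gaussian factor and of $P_4$ by sub-gaussian mean concentration is cleaner and more rigorous than the paper's. What your approach buys is a standard, checkable argument; what it costs is the norm: the lemma is invoked downstream through H\"older's inequality as $\|\Sigma_{\text{test}}\|\cdot\|\text{noise}\|_*$, so the paper needs a \emph{nuclear}-norm bound. Converting your operator-norm bounds costs a rank factor, which is harmless for $P_1,P_2$ (rank at most $r$) and $P_4$ (rank one), but for $P_3=\frac{1}{|S|-1}\sum_i\vxi_i^{(1)}(\vxi_i^{(2)})^\top$ the rank can be $\min(d,|S|)$, so ``dominated once $|S|\ge d$'' holds in operator norm but not obviously in nuclear norm (there $\|P_3\|_*$ can be of order $d^{3/2}/\sqrt{|S|}$). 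To be fair, the paper's own dismissal of $P_3$ is equally cavalier on this point, but if you adopt your route you should either state the result in operator norm and rework the downstream H\"older step, or add the rank bookkeeping explicitly.
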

\begin{proof}
    To upper bound the P1 and P2, we have
   \begin{align*}
    & \| \sum_{i} \vz_i^{vl}(\xi_i^{vl})^\top \|_*^2
    = \Tr\left( \sum_{i,j} \xi_i^{vl} (\vz_i^{vl})^\top \vz_j^{vl} \xi_j^{vl}\right) 
    = \sum_{i,j} (\vz_i^{vl})^\top \vz_j^{vl} (\xi_j^{vl})^\top \xi_i^{vl} \\
    & \E \| \sum_{i} \vz_i^{vl}(\xi_i^{vl})^\top \|_*^2 
    = \E \left[\sum_{i} (\vz_i^{vl})^\top \vz_i^{vl} (\xi_i^{vl})^\top \xi_i^{vl} \right]
    = |S| d
    \end{align*}
    Regarding each $(\vz_i^{vl})^\top \vz_j^{vl} (\xi_j^{vl})^\top \xi_i^{vl}$ as weakly dependent variable, then by using Bernstein inequality, we have, with high probability $1-\delta$,
    \begin{align*}
         \| \sum_{i} \vz_i^{vl}(\xi_i^{vl})^\top \|_*^2
         \leq |S|d + \sqrt{d |S|^2 \sigma_\xi^2\log(1/\delta)} 
         \leq |S|d\sqrt{\log(1/\delta)}
    \end{align*}
    So 
    $\frac{1}{|S|}\| \sum_{i} \vz_i^{vl}(\xi_i^{vl})^\top \|_* \leq \sqrt{\frac{d\log(1/\delta)}{|S|}}$. Note that $\|\bar\vx^{vl}\| \lesssim \sqrt{\frac{\log(|S|d)}{|S|}}$ (like Proposition 2.5 in \citet{wainwright2019high}), it is easy to see that P3 ad P4 are the low order terms if $\delta \lesssim \frac{1}{|S|d}$.
\end{proof}

\begin{lemma}[Intuition behind VAS]
    With high probability $1-\delta$, suppose the  hind-side best subset has at least $\underline{n}$ number of samples, then we have 
    \begin{align*}
        \Delta(S) 
        = \frac{1}{\rho}\max_{S' \in D_\text{train}} \left( \Tr\left( \Sigma_{\text{test}} (\Sigma_{S'} - \Sigma_{S}) \right) \right) + \sqrt{\frac{d\log(1/\delta)}{\underline{n}}} + \sqrt{\frac{d\log(1/\delta)}{|S|}}
    \end{align*}
\end{lemma}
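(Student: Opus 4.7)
My plan is to follow essentially the three-step sketch already given after the lemma, upgrading it to a fully quantitative statement by bookkeeping all the noise terms with the concentration bound established in the second lemma. Specifically, I will (i) simplify the test loss under the zero-mean assumptions on $\vz^{vl}$ and $\vxi^{vl}$; (ii) substitute the closed form of $\hat G_v\hat G_l^\top$ from Lemma~\ref{lemma:EYM_formal}; (iii) apply the same identity to the hind-side optimal subset $S^\star = \arg\min_{S'\subset D_\text{train}}\gL_\text{test}(\gA^\rho(S'))$ and take the difference; and (iv) union-bound the noise contributions across $S$ and $S^\star$.

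\textbf{Step 1: simplifying the test loss.} Writing $\vx^{vl} = G_{vl}^*\vz^{vl}+\vxi^{vl}$ and using the independence of $\vxi$ from $\vz$ together with $\E[\vz^{vl}]=0$ and $\E[\vxi^{vl}]=0$, the ``contrastive'' part
$\E\langle G_v^\top\vx^v,G_l^\top\vx_2^l\rangle$ collapses to zero since $\vx^v$ and $\vx_2^l$ are independent draws from $\gD_\text{test}$. Hence
\begin{align*}
\gL_\text{test}(\hat G_v,\hat G_l)
 = -\E_{\vx^{vl}\sim\gD_\text{test}}\bigl\langle \hat G_v^\top\vx^v,\,\hat G_l^\top\vx^l\bigr\rangle
 = -\Tr\!\bigl(\E[\vx^l(\vx^v)^\top]\,\hat G_v\hat G_l^\top\bigr).
\end{align*}
Under the model, $\E[\vx^l(\vx^v)^\top] = G_l^*\Sigma_\text{test}(G_v^*)^\top$ (since $\vxi^v,\vxi^l$ are independent and mean zero).

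\textbf{Step 2: inserting the closed form.} Lemma~\ref{lemma:EYM_formal} gives $\hat G_v\hat G_l^\top = \tfrac{1}{\rho}G_v^*\Sigma_S(G_l^*)^\top + \sum_{k=1}^{4}P_k$. Using the orthonormality of $G_v^*,G_l^*$, the leading term contributes $-\tfrac{1}{\rho}\Tr(\Sigma_\text{test}\Sigma_S)$, while the four noise terms contribute $-\Tr\bigl(G_l^*\Sigma_\text{test}(G_v^*)^\top\sum_k P_k\bigr)$. By the cyclic property and $\|AB\|_*\le\|A\|\|B\|_*$ together with $\|\Sigma_\text{test}\|\le 1$ (follows from $\|\vz^{vl}\|=1$), each of these is bounded by $\|P_k\|_*$. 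The second lemma in the appendix then gives, with probability at least $1-\tfrac{1}{|S|d}$, that $\sum_k\|P_k\|_* \lesssim \sqrt{d\log(d|S|)/|S|}$, yielding
\begin{align*}
\gL_\text{test}(\hat G_v,\hat G_l)
 = -\tfrac{1}{\rho}\Tr(\Sigma_\text{test}\Sigma_S) + O\!\left(\sqrt{\tfrac{d\log(d|S|)}{|S|}}\right).
\end{align*}

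\textbf{Step 3: applying the same identity to $S^\star$ and taking the difference.} Let $S^\star$ be any minimizer of $\gL_\text{test}(\gA^\rho(S'))$ over $S'\subset D_\text{train}$ with $|S^\star|\ge\underline n$. The same derivation gives $\gL_\text{test}(\gA^\rho(S^\star))= -\tfrac{1}{\rho}\Tr(\Sigma_\text{test}\Sigma_{S^\star}) + O(\sqrt{d\log(d\underline n)/\underline n})$. Subtracting and observing that since $S^\star$ minimizes the test loss, the empirical maximizer of $\Tr(\Sigma_\text{test}\Sigma_{S'})$ over $S'\subset D_\text{train}$ agrees with $S^\star$ up to the noise slack (so we can upper-bound by the $\max$ over $S'$), we obtain
\begin{align*}
\Delta^\rho(S)
 = \tfrac{1}{\rho}\max_{S'\subset D_\text{train}}\Tr\!\bigl(\Sigma_\text{test}(\Sigma_{S'}-\Sigma_S)\bigr)
 + O\!\left(\sqrt{\tfrac{d\log(d|S|)}{\underline n}} + \sqrt{\tfrac{d\log(d|S|)}{|S|}}\right).
\end{align*}
A final union bound over the two events (for $S$ and $S^\star$) at level $\delta = 1/(|S|d)$ delivers the stated high-probability guarantee.

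\textbf{Main obstacle.} The routine moves (Step 1 and the closed-form substitution) are straightforward. The real care is in Step 2--3: the noise terms $P_1,P_2$ are weakly dependent sums that must be controlled in nuclear norm rather than spectral norm, and one has to verify that the cross-product $\Tr(G_l^*\Sigma_\text{test}(G_v^*)^\top P_k)$ does not blow up the bound despite $\Sigma_\text{test}$ being only bounded in operator norm. The second technical subtlety is that the minimizer $S^\star$ is data-dependent, so to reduce it to the deterministic noise bound one needs a uniform control (or at least a bound that holds simultaneously for the fixed $S$ and for $S^\star$), which is why I fold the dependence on $|S^\star|\ge \underline n$ into the second $\sqrt{d\log/\underline n}$ term and insert the $\max$ over $S'$ rather than keeping $S^\star$ explicit.
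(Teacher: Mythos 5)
Your proposal is correct and follows essentially the same route as the paper's proof: simplify the test loss via the zero-mean argument, substitute the Eckart--Young--Mirsky closed form for $\hat G_v\hat G_l^\top$, bound the noise contribution by H\"older's inequality ($\|\E[\vx^v(\vx^l)^\top]\|\,\|\text{noise}\|_*$) using the nuclear-norm concentration lemma, and subtract the analogous expression for the hindsight-optimal subset. Your extra remark about needing simultaneous control of the noise for the data-dependent $S^\star$ is a fair point, but you resolve it the same way the paper implicitly does, so the arguments coincide.
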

\begin{proof}
    For any learnt $G_v, G_l$ based on dataset $S$, we have 
    \begin{align*}
        \gL_\text{test}(G_v, G_l) 
        & = \Tr( G_v^\top\E_{\vx_{vl} \sim \gD_{\text{test}}}[\vx^v (\vx^l)^\top] G_l  ) \\
        & = \Tr( \E_{\vx_{vl} \sim \gD_{\text{test}}}[\vx^v (\vx^l)^\top] G_l G_v^\top ) \\
        & = \frac{1}{\rho}\Tr\left(\E_{\vx_{vl} \sim \gD_{\text{test}}}[\vx^v (\vx^l)^\top] G_l^*\Sigma_S (G_v^*)^\top\right)
            - \Tr\left(\E_{\vx_{vl} \sim \gD_{\text{test}}}[\vx^v (\vx^l)^\top] \text{noise}_S\right) \\
        &= \frac{1}{\rho} \Tr\left( (G_v^*)^\top \E_{\vx_{vl} \sim \gD_{\text{test}}}[\vx^v (\vx^l)^\top] G_l^*\Sigma_S \right)
            - \Tr\left( \E_{\vx_{vl} \sim \gD_{\text{test}}}[\vx^v (\vx^l)^\top] \text{noise}_S\right) \\
        & = - \frac{1}{\rho} \Tr\left( \Sigma_{\text{test}}\Sigma_S \right) 
            - \Tr\left( \E_{\vx_{vl} \sim \gD_{\text{test}}}[\vx^v (\vx^l)^\top] \text{noise}_S\right)
    \end{align*} 
    Here the first equation comes from Theorem~\ref{them: simplified test loss} and the third equation comes from Lemma~\ref{lemma:EYM_formal}. \\
    Consequently, we have 
    \begin{align*}
        - \min_{S' \in D_\text{train}} \gL_\text{test}(\gA(S'))
        & = \max_{S' \in D_\text{train}} \left( \frac{1}{\rho}\Tr\left( \Sigma_{\text{test}}\Sigma_{S'} \right)  + \Tr\left( \E_{\vx_{vl} \sim \gD_{\text{test}}}[\vx^v (\vx^l)^\top] \text{noise}_{S'}\right) \right) \\
        & \leq \frac{1}{\rho}\max_{S' \in D_\text{train}} \left( \Tr\left( \Sigma_{\text{test}}\Sigma_{S'} \right) \right) 
            + \|\E_{\vx_{vl} \sim \gD_{\text{test}}}[\vx^v (\vx^l)^\top]\| \|\text{noise}_{S'}\|_* \\
        & \leq \frac{1}{\rho}\max_{S' \in D_\text{train}} \left( \Tr\left( \Sigma_{\text{test}}\Sigma_{S'} \right) \right) 
                    + \gO\left(\sqrt{\frac{d\log(1/\delta)}{\underline{n}}} \right)
    \end{align*}
    Therefore, we have the final result as 
    \begin{align*}
        \Delta(S) 
        & = \gL_\text{test}(\hat{G}_{vl}) - \min_{S' \in D_\text{train}} \gL_\text{test}(\gA(S'))\\
        & = \frac{1}{\rho}\max_{S' \in D_\text{train}} \left( \Tr\left( \Sigma_{\text{test}} (\Sigma_{S'} - \Sigma_{S}) \right) \right) + \gO\left(\sqrt{\frac{d\log(1/\delta)}{\underline{n}}} + \sqrt{\frac{d\log(1/\delta)}{|S|}} \right)
    \end{align*}
\end{proof}

\begin{theorem}[Main]
\label{them: main (app)}
    Under the assumption of Lemma~\ref{lem: main}, we have
    \begin{align*}
       \Delta(S)
        & \leq \text{noise} +  \|\bar{\Sigma}_{\text{test}} - \Sigma_{\text{test}} \| \|\Sigma_S - \Sigma_{\text{best}}\|_* \\
        & + 
        \begin{cases}
            \epsilon_{v * l}^S  \quad \text{(vision+language)} \\
            \left( \epsilon_v^S +  \sqrt{1 - \frac{1}{|S|}\sum_{i \in [S]} \langle \vz^v,\vz^l \rangle)}\right) \quad \text{(vision only)}
        \end{cases}
    \end{align*}
\end{theorem}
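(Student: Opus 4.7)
The starting point is Lemma~\ref{lem: main}, which (up to the noise term) gives
\[
\Delta^\rho(S) \leq \tfrac{1}{\rho}\,\Tr\!\bigl(\Sigma_{\text{test}}(\Sigma_{\text{best}}-\Sigma_S)\bigr) + \text{noise},
\]
where $\Sigma_{\text{best}}=\Sigma_{S^*}$ for the hind-side optimal budget-$|S|$ subset $S^*$. My plan is to absorb three sources of error one at a time: (i) the mismatch between the true test covariance $\Sigma_{\text{test}}$ and the proxy $\bar\Sigma_{\text{test}}$; (ii) the teacher embedding error that separates what the algorithm can actually compute from the latent quantity $\Sigma_{S'}$; and (iii) in the vision-only variant, the replacement of the cross moment $\vz^v(\vz^l)^\top$ by the self moment $\vz^v(\vz^v)^\top$.

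First, I would split $\Sigma_{\text{test}}=(\Sigma_{\text{test}}-\bar\Sigma_{\text{test}})+\bar\Sigma_{\text{test}}$ inside the trace. The cross term is handled by the operator/nuclear H\"older inequality $|\Tr(AB)|\leq \|A\|\,\|B\|_*$, producing exactly the $\|\bar\Sigma_{\text{test}}-\Sigma_{\text{test}}\|\,\|\Sigma_S-\Sigma_{\text{best}}\|_*$ term in the statement. It then remains to control $\Tr\!\bigl(\bar\Sigma_{\text{test}}(\Sigma_{\text{best}}-\Sigma_S)\bigr)$ using the fact that $S$ is optimal for the empirically-computable surrogate objective on $\bar\Sigma_{\text{test}}$.

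For the vision+language case, let $T_{v+l}(S')=\tfrac{1}{|S'|}\sum_{i\in S'} \bar G_v^\top \vx_i^v(\vx_i^l)^\top \bar G_l$, so by the teacher's cross-modal error bound, $\|T_{v+l}(S')-\Sigma_{S'}\|_*\leq \epsilon_{v*l}^{|S'|}$. Applying H\"older with $\|\bar\Sigma_{\text{test}}\|\leq 1$, I get $|\Tr(\bar\Sigma_{\text{test}}(T_{v+l}(S')-\Sigma_{S'}))|\leq \epsilon_{v*l}^{|S'|}$. Chaining through $\Sigma_{\text{best}}\rightsquigarrow T_{v+l}(S^*)\rightsquigarrow T_{v+l}(S)\rightsquigarrow \Sigma_S$, where the middle step uses the optimality of $S$ for the surrogate, gives $\Tr(\bar\Sigma_{\text{test}}(\Sigma_{\text{best}}-\Sigma_S))\leq 2\epsilon_{v*l}^{|S|}$, which (absorbing the factor of $2$ into the constant) yields the first case.

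For the vision-only case, let $T_v(S')=\tfrac{1}{|S'|}\sum_{i\in S'}\bar G_v^\top \vx_i^v(\vx_i^v)^\top \bar G_v$ and $\widetilde\Sigma_{S'}^{vv}=\tfrac{1}{|S'|}\sum_{i\in S'}\vz_i^v(\vz_i^v)^\top$. The same chaining argument as above, now with $\|T_v(S')-\widetilde\Sigma_{S'}^{vv}\|_*\leq \epsilon_v^{|S'|}$, gives a bound in terms of $\widetilde\Sigma^{vv}$ rather than $\Sigma$; I must then quantify $\widetilde\Sigma_{S'}^{vv}-\Sigma_{S'}=\tfrac{1}{|S'|}\sum_i \vz_i^v(\vz_i^v-\vz_i^l)^\top$. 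Each rank-one summand has nuclear norm $\|\vz_i^v\|\,\|\vz_i^v-\vz_i^l\|=\sqrt{2(1-\langle\vz_i^v,\vz_i^l\rangle)}$ using $\|\vz^{vl}\|=1$, and averaging via Jensen/Cauchy--Schwarz gives $\|\widetilde\Sigma_{S'}^{vv}-\Sigma_{S'}\|_*\leq \sqrt{2(1-\tfrac{1}{|S'|}\sum_i\langle\vz_i^v,\vz_i^l\rangle)}$. Applying H\"older one more time produces the $\sqrt{1-\tfrac{1}{|S|}\sum_i\langle\vz^v,\vz^l\rangle}$ term alongside $\epsilon_v^{|S|}$.

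The main obstacle I anticipate is bookkeeping the chain $\Sigma_{\text{best}}\to T(S^*)\to T(S)\to\Sigma_S$ carefully while keeping the error additive, in particular making sure that the surrogate optimality step uses exactly the same $\bar\Sigma_{\text{test}}$ on both sides so the two teacher-error invocations line up; and, in the vision-only branch, applying the Jensen step \emph{inside} the nuclear norm (rather than after taking the trace) so the resulting bound has the right $\sqrt{\cdot}$ dependence rather than a $\sqrt{\cdot}$ of an absolute value that could be loose. No other subtle estimates seem required; everything else is bounded operator-versus-nuclear duality and the fixed-budget optimality of $S$.
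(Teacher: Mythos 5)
Your proposal matches the paper's proof essentially step for step: the same three-way decomposition of $\Tr\left(\Sigma_{\text{test}}(\Sigma_{\text{best}}-\Sigma_S)\right)$, H\"older's inequality for the proxy-mismatch term giving $\|\bar{\Sigma}_{\text{test}}-\Sigma_{\text{test}}\|\|\Sigma_S-\Sigma_{\text{best}}\|_*$, the teacher-error definitions to bound $\|\bar{\Sigma}_S-\Sigma_S\|_*$ in each branch, and the same $\sqrt{1-\frac{1}{|S|}\sum_{i}\langle\vz^v,\vz^l\rangle}$ estimate (which you reach via rank-one summands plus Jensen rather than the paper's Cauchy--Schwarz on the stacked matrices $Z_v,Z_l$, to the same effect and with the same stray $\sqrt{2}$). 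If anything, your explicit chaining $\Sigma_{\text{best}}\to T(S^*)\to T(S)\to\Sigma_S$ is slightly more careful than the paper, which discards $\Tr\left(\bar{\Sigma}_{\text{test}}(\Sigma_{\text{best}}-\bar{\Sigma}_S)\right)$ as nonpositive ``by the definition of $\bar{\Sigma}_S$'' without charging the teacher error on the hindsight-best subset; your factor of $2$, absorbed into constants, is the honest price of that step.
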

\begin{proof}
    Based on Lemma~\ref{lem: main}, we will focus on the error cause from selecting subset $S$, that is, $\Tr \Sigma_{\text{test}}\Sigma_S$. Since the exact $\Sigma_{\text{test}}$ is unknown, we assume the access to some proxy $\bar{\Sigma}_{\text{test}}$ instead.

    Recall that, for any $S$, we have ground-truth $\Sigma_S = \E_{\vz_{vl} \in S} \vz^v (\vz^l)^\top$. Unfortunately, this is not directly observable by the learner. Instead, the learner is able to observe some proxy  $\bar{\Sigma}_S$ based on the teacher model $\bar{G}_{vl}$ and therefore solving
    \begin{align*}
        \argmax_{S} \Tr\left(\bar{\Sigma}_{\text{test}}\bar{\Sigma}_S\right)
    \end{align*}
    and therefore, denote $\Sigma_\text{best} = \argmax_{S' \in D_\text{train}} \Tr\left( \Sigma_{\text{test}} \Sigma_{S'} \right)$
    \begin{align*}
         \Tr \left( \Sigma_{\text{test}}(\Sigma_\text{best} - \Sigma_S) \right)
         & = \Tr \left( \bar{\Sigma}_{\text{test}}(\Sigma_\text{best} - \bar{\Sigma}_S) \right)
             + \Tr \left( \bar{\Sigma}_{\text{test}}(\bar{\Sigma}_S - \Sigma_S) \right)
             + \Tr \left( (\Sigma_{\text{test}} - \bar{\Sigma}_{\text{test}})(\Sigma_\text{best} - \Sigma_S) \right) \\
        & \leq \Tr \left( \bar{\Sigma}_{\text{test}}(\bar{\Sigma}_S - \Sigma_S) \right)
             + \Tr \left( (\Sigma_{\text{test}} - \bar{\Sigma}_{\text{test}})(\Sigma_\text{best} - \Sigma_S) \right) \\
        & \leq \| \Sigma_{\text{test}}\| \|\bar{\Sigma}_S - \Sigma_S\|_* 
            +  \|\bar{\Sigma}_{\text{test}} - \Sigma_{\text{test}} \| \|\Sigma_S - \Sigma_{\text{best}}\|_*
    \end{align*}
    where the first inequality is by the definition of $\bar{\Sigma}_S$ and the second inequality comes from holder's inequality.
    Now the key is to upper bound $\|\bar{\Sigma}_S - \Sigma_S\|_*$ based on our chosen strategy.

    In option 1, we use the clip embedding from both visual and language modal. That is, choose 
    $\bar{\Sigma}_S = \sum_{\vx_{vl} \in S} (\bar{G}_v)^\top \vx^v (\vx^l)^\top \Bar{G}_l$. Then we have 
    \begin{align*}
        \|\bar{\Sigma}_S - \Sigma_S\|_*
        \leq  \frac{1}{|S|}\| \sum_{\vx_{vl} \in S} (\bar{G}_v)^\top \vx^v (\vx^l)^\top \Bar{G}_l - \sum_{\vx_{vl} \in S} \vz^v (\vz^l)^\top \|_* 
        \leq \epsilon_{v * l}^S
    \end{align*}

    In option 2, we use the clip embedding from language model only. That is choose 
    $\bar{\Sigma}_S = \sum_{\vx_{vl} \in S} \bar{G}_v^\top \vx^v (\vx^v)^\top \Bar{G}_v$. Then, by definition of $\epsilon_S$, we have 
    \begin{align*}
        \|\bar{\Sigma}_S - \Sigma_S\|_*
        & \leq  \frac{1}{|S|}\| \sum_{\vx_{vl} \in S} \bar{G}_v^\top \vx^v (\vx^v)^\top \Bar{G}_v - \sum_{\vx_{vl} \in S} \vz^v (\vz^v)^\top \|_* + \frac{1}{|S|}\| \sum_{\vx_{vl} \in S} \vz^v (\vz^v)^\top -  \Sigma_S\|_* \\
        & \leq \epsilon_v^S +  \frac{1}{|S|}\| \sum_{\vx_{vl} \in S} \vz^v (\vz^v)^\top -  \Sigma_S\|_*
    \end{align*}
    Now to further bound the second term, we have
    \begin{align*}
        \frac{1}{|S|} \| \sum_{\vx_{vl} \in S} \vz^v (\vz^v)^\top -  \Sigma_S\|_* 
        & \leq \frac{1}{|S|} \|Z_v^\top \|_* \|Z_v - Z_l\|_*\\
        & = \frac{1}{|S|} \sqrt{\Tr Z_v Z_v^\top} \sqrt{ \Tr (Z_v - Z_l)^\top (Z_v - Z_l)} \\
        & = \frac{1}{|S|} \sqrt{\Tr(I_{n\times n})} \sqrt{2\Tr\left( I_{n\times n} - Z_v Z_l^\top\right)}\\
        & = \frac{1}{|S|} \sqrt{2|S|(|S| - \sum_{i \in [S]} \langle \vz^v,\vz^l \rangle)} \\
        & = \sqrt{1 - \frac{1}{|S|}\sum_{i \in [S]} \langle \vz^v,\vz^l \rangle)}
    \end{align*}
    Therefore, we finish the proof.
\end{proof}

\begin{theorem}[A simplified version of test loss]
\label{them: simplified test loss}
    Under the assumption that both $\vz_{vl}, \xi_{vl}$ is zero-mean, maximizing the clip score gap is equivalent to maximize the clip score of the same sample. 
     \begin{align*}
         \gL_{\text{test}} (G_v, G_l):= - \E_{\vx_{vl} \sim \gD_\text{test}}\langle G_v^\top\vx_v, G_l^\top \vx_l \rangle
     \end{align*}
\end{theorem}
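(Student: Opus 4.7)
The plan is to start from the original definition
\[
\gL_\text{test}(G_v,G_l) = \E_{\vx^{vl}\sim\gD_\text{test},\ \vx_2^{vl}\sim\gD_\text{test}}\bigl(\langle G_v^\top\vx^v,G_l^\top\vx_2^l\rangle-\langle G_v^\top\vx^v,G_l^\top\vx^l\rangle\bigr),
\]
split it by linearity into two expectations, and show that the ``cross'' term involving the independent pair $(\vx^v,\vx_2^l)$ vanishes. What remains is exactly $-\E\langle G_v^\top\vx^v,G_l^\top\vx^l\rangle$, which is the stated simplified form.

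First, I would rewrite the inner products using the trace identity $\langle G_v^\top\vx^v,G_l^\top\vx_2^l\rangle = \Tr\bigl(G_l^\top\vx_2^l(\vx^v)^\top G_v\bigr)$ so that the expectation moves inside the trace. Since $\vx^v$ and $\vx_2^l$ are drawn independently from $\gD_\text{test}$, the joint expectation factors:
\[
\E\bigl[\vx_2^l(\vx^v)^\top\bigr] = \E[\vx_2^l]\,\E[\vx^v]^\top.
\]
Now I would invoke the generative assumption $\vx^{vl} = G_{vl}^*\vz^{vl} + \vxi^{vl}$ together with the hypothesis that both $\vz^{vl}$ and $\vxi^{vl}$ are zero-mean. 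This gives $\E[\vx^v]=0$ and $\E[\vx_2^l]=0$, so the outer product of the means is the zero matrix and the cross term contributes nothing to the test loss.

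Consequently,
\[
\gL_\text{test}(G_v,G_l) = -\,\E_{\vx^{vl}\sim\gD_\text{test}}\langle G_v^\top\vx^v,G_l^\top\vx^l\rangle,
\]
which is the claim. There is essentially no technical obstacle: the only subtlety is being explicit that the two samples in the cross term are independent draws (so that the expectation factors into a product of means), as opposed to components of a single paired sample $(\vx^v,\vx^l)$ which are correlated through the shared latent $\vz$. The same identity is what is implicitly used in step \ding{182} of the proof sketch of Lemma~\ref{lem: main}.
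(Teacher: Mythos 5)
Your proposal is correct and follows essentially the same route as the paper: both arguments reduce to observing that the cross term $\E\langle G_v^\top\vx^v, G_l^\top\vx_2^l\rangle$ vanishes because the two draws are independent and the marginals are zero-mean (the paper phrases this by conditioning on the first sample and writing $\E_{\vx'}[\vx_l'-\vx_l]=-\vx_l$, while you factor the expectation of the outer product; only the zero mean of one marginal is actually needed). The surviving term is exactly the stated $-\E\langle G_v^\top\vx^v, G_l^\top\vx^l\rangle$, so the proof is complete as written.
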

\begin{proof}
    For any $\vx_{vl}$, we have
    \begin{align*}
        & \E_{\vx_{vl}' \sim \gD_\text{test}} (\langle G_v^\top\vx_v, G_l^\top\vx_l'\rangle- \langle G_v^\top\vx_v, G_l^\top\vx_l\rangle) \\
        & = \langle G_v^\top\vx_v, G_l^\top \E_{\vx_{vl}' \sim \gD_\text{test}}(\vx_l'-\vx_l) \rangle \\
        & = - \langle G_v^\top\vx_v, G_l^\top \vx_l \rangle \\
    \end{align*}
\end{proof}

\section{Details of Experiments}

\subsection{Training Configuration} \label{sub: config}
For the training sets, we choose ViT-B/32 model as the baseline, with training computations fixed at $9.5\times 10^{16}$ total multiply-accumulate operations (MACs), in line with the Datacomp~\cite{gadre2023datacomp} standards. All experiments were conducted on NVIDIA A40 GPUs.


\subsection{Baselines}\label{sub: baselines}
We add some details about the baselines used in our paper.
\begin{itemize}[leftmargin=*]
    \item \textbf{Text-based filtering.}  \citet{gadre2023datacomp} proposes a text-based filtering that tries to select the data that contains caption overlapping with the class name from ImageNet-21K or ImageNet-1K. We just apply this filter on DataComp because this heuristic method cannot determine the size of the selected pool of data, and thus not suitable for the coreset selection scenario on CC12M.
    \item \textbf{Image-based filtering.}  \citet{gadre2023datacomp} also proposes a heuristic way to sample the visual content overlaps with ImageNet-1K classes. They first apply filtering by language (only choose English caption by fasttext~\cite{joulin2016bag}) and caption length (over two words and 5 characters). Then they cluster the image embeddings from training data to 100K groups using Faiss~\cite{johnson2019billion}, and keep the groups whose cluster center is the nearest neighbor to at least one image embedding of ImageNet-1K image. 
    \item \textbf{$\mathbb{D}^2$ Pruning.}  \citet{maharana2023d2} tries to represent the dataset as an undirected graph for coreset selection. They assign the difficulty for each example and use message passing to update the difficulty score incorporating the difficulty of its neighboring examples, and finally try to keep both diverse and difficult subsets. For our experiments, we adhere to the default hyperparameters of $\mathbb{D}^2$ on DataComp as specified in their official codebase. Additionally, we tune the number of nearest neighbors over $k = \{1,5,10,15\}$ and report their best results on CC12M, following their recommendations in the paper.
\end{itemize}
\subsection{Algorithm Details}\label{sub: algo details}
In this section, we illustrate the details of our VAS-D algorithm, the generation of text embeddings of ImageNet-1k, and the choice of hyperparameters.

\paragraph{VAS-D.} We use greedy remove methods to achieve the object (\ref{eq:tr_S2}). Note that if the number of greedy steps is $\tau$, and let $\bar\Sigma_{\text{test},i} = \frac{1}{|S_i|}\sum_{j \in S_i}\bar f_v(\vx_j^v)\bar f_v(\vx_j^v)^\top$ where $S_i$ is the selected subset at step $i$, then obtaining (\ref{eq:tr_S2}) with step-by-step greedy remove is equivalent to removing the data satisfies (\ref{eq:greedy}):
\begin{equation}\label{eq:greedy}
    S_{i} \setminus S_{i+1} = \arg \max_{p \in S_{i}} \Tr\left((\bar \Sigma_{\text{test},i} - f_v(\vx_p^v)\bar f_v(\vx_p^v)^\top)^2\right) = \arg \min_{p \in S_{i}} \left[\bar f_v(\vx_p^v)^T \cdot\bar \Sigma_{\text{test},i}\cdot\bar f_v(\vx_p^v)\right], \quad i \in \{0, \ldots, \tau-1\}
\end{equation}
Then we can detail the algorithm process of VAS-D in Algorithm~\ref{algo: VAS-D}. In experiments, we all fix the number of greedy step $\tau$ to $168$ on both DataComp and CC12M. But this is not a parameter that needs to be finely tuned. Generally, the selected subset is well when $\tau > 100$ for DataComp and CC12M. 

\begin{algorithm}
\caption{VAS-D with greedy remove strategy}
\label{algo: VAS-D}
\begin{algorithmic}
    \STATE {\bfseries Inputs:} image embeddings of the data after CLIP score filtering $\{\bar f_v(\vx_i^v)\}_{i \in S}$, target size $N$, number of greedy steps $\tau$
    \STATE Initialize $S_0 = S, N_0 = |S|$
    \FOR{$t=1$ {\bfseries to} $\tau$}
    \STATE Size at step $t$ : $N_t = N_0 - \frac{t}{\tau}(N_0 - N)$.
    \STATE Prior matrix: $\bar\Sigma_{\text{test}, t-1} = \sum_{j \in S_{t-1}}\bar f_v(\vx_j^v)\bar f_v(\vx_j^v)^\top$
    \STATE Updated VAS for each sample $p$ in $S_{t-1}$: $\text{VAS}_p = \bar f_v(\vx_p^v)^\top\cdot \bar\Sigma_{\text{test}, t-1}\cdot \bar f_v(\vx_p^v)$
    \STATE Construct $S_t$ such that it contains the data with highest VAS in $S_{t-1}$ and satisfies $|S_t| = N_t$. 
    \ENDFOR
\end{algorithmic}
\end{algorithm}


\paragraph{Text embeddings of ImageNet-1k.}
In the ablation study (Tab.~\ref{tab:abla}), we use text embedding from ImageNet-1k to calculate VAS. There we select 80 templates as \cite{radford2021learning} for prompt generation for each class and take the mean of their embeddings as the representative text embedding for images within that class. 

\subsection{Hyperparameters}\label{sub: hyperp}


The main hyper-parameters of our algorithm (Alg.~\ref{algo: main} and Alg.~\ref{algo: VAS-D}) are the target numbers after CLIP score filtering and VAS filtering. In this paper, we choose CLIP score filtering to get around 50\% of data from the original pool and the VAS filtering to further get 30\% of data from the original pool . This is approximately equivalent to setting CLIP score threshold as \textbf{0.214} and VAS threshold as \textbf{0.153}. 
To give readers a more intuitive understanding of how these two thresholds select most of the high-quality and informative data, we provide a visualization with different CLIP scores and VAS scores in Fig.~\ref{fig:vis_many}.

Notably, the choice of both thresholds is \textbf{train-dataset-independent}. That is, the CLIP score only evaluates individual data quality without considering the whole train data distribution;  Similarly, VAS, according to our definition, is only dependent on individual data embedding and the test prior (ImageNet-1k in our case). Such independence suggests that, when the teacher embedding model and the test prior is fixed, our threshold choice can be potentially transferred to other train datasets. So we recommend other researchers to try the initial threshold at around these values when executing our algorithm.

In addition, for VAS-D where the test-prior is highly related to the training dataset, there also exists an easy way to decouple. Specifically, we can use VAS(ImageNet-1k) as data prior to estimate the target number after VAS-D filtering and then apply VAS-D(Traindata) as Alg.~\ref{algo: VAS-D}. In this way, our methods can also be used with little or no parameter tuning.

\subsection{Time Cost}\label{sub: time cost}
We compare the time cost for different methods on 1 NVIDIA A40 GPU in Tab.~\ref{tab:time}. Here since all the key approaches in this paper require obtaining embeddings, so we don't take the time of calculating image/text embeddings with the pretrained CLIP model into consideration. We can observe that our VAS(ImageNet-1k/Dataset) + CLIP score filtering is extremely time efficient. Although VAS-D takes longer time due to the multi-step greedy removal process, it's acceptable compared with total training time and still faster than image-based filtering which requires Faiss~\cite{johnson2019billion} for clustering. 
\begin{table}
\small
\centering
\caption{Approximate time cost for different methods on DataComp with 1 NVIDIA A40.}
\label{tab:time}
\begin{tabular}{@{}lc@{}}
\toprule
  & Time Cost\\ \midrule
CLIP Score  & $<$\textbf{1m}\\
Image-based & 1h\\
$\mathbb{D}^2$ Pruning & 25m\\
VAS(ImageNet-1k/DataComp) & $<$\textbf{1m}\\
VAS-D(DataComp) &  36m\\
\midrule
Training Time & 16h\\
\bottomrule
\end{tabular}
\end{table}

\section{Additional Visualization}\label{sec: add_vis}
We further visualize more data with different CLIP scores and VAS in Fig.~\ref{fig:visual} and Fig.~\ref{fig:vis_many}. In Fig.~\ref{fig:vis_many} we can see that VAS(ImageNet-1k) always have similar score as VAS(DataComp).

Furthermore, we randomly sample 5000 data points from DataComp and calculate their CLIP score and VAS and scatter them in Fig.~\ref{fig:scatter}, we can see that after filtering out the data with top 50\% CLIP score, data with lower VAS may have higher probability for achieving high CLIP score, and data with higher VAS generally have a more centralized CLIP score range.

\begin{figure*}[h]
\centering
    \caption{Data distribution on VAS and CLIP score. We randomly sample 5000 points in DataComp and show its corresponding VAS and CLIP score, here VAS is calculated by the image embeddings from ImageNet-1k.}
    \includegraphics[width=0.5 \textwidth]{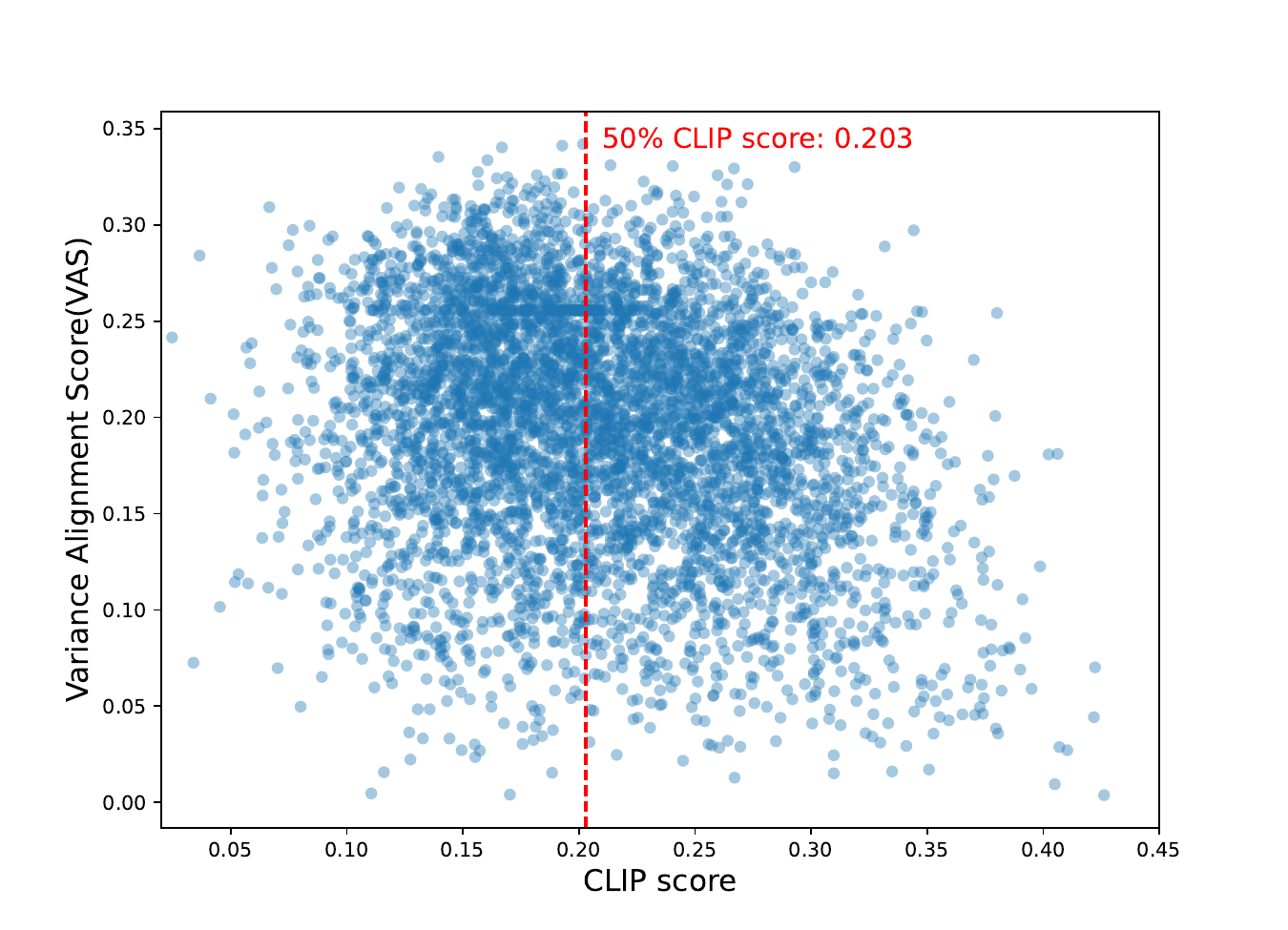}
    \label{fig:scatter}
    \vspace{-1em}
\end{figure*}

\begin{figure*}[h]
\centering
    \caption{Visualization of image data with different CLIP scores and VAS in DataComp.}
    \includegraphics[width=0.80 \textwidth]{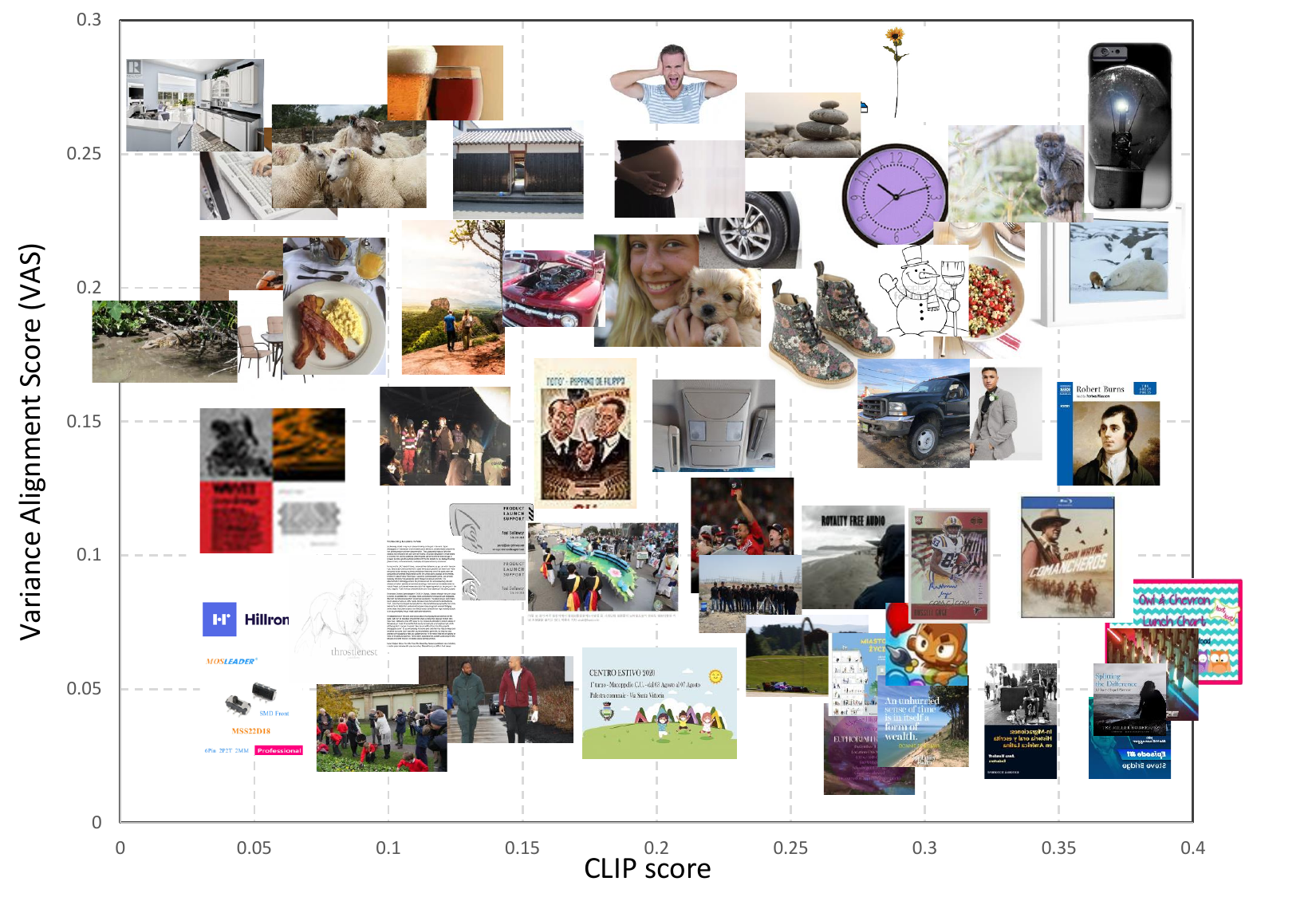}
    \label{fig:visual}
\end{figure*}


\newpage

\begin{figure*}[t]
    \centering
    \caption{Visualization of data pairs with different CLIP scores and VAS in DataComp. Here `img1k\_vas' means VAS(ImageNet-1k) and `self\_vas' denotes VAS(DataComp). We can see that for most of the data, VAS(DataComp) is always similar to VAS(ImageNet-1k).}
    \rotatebox[]{270}{\includegraphics[width=1.2 \textwidth]{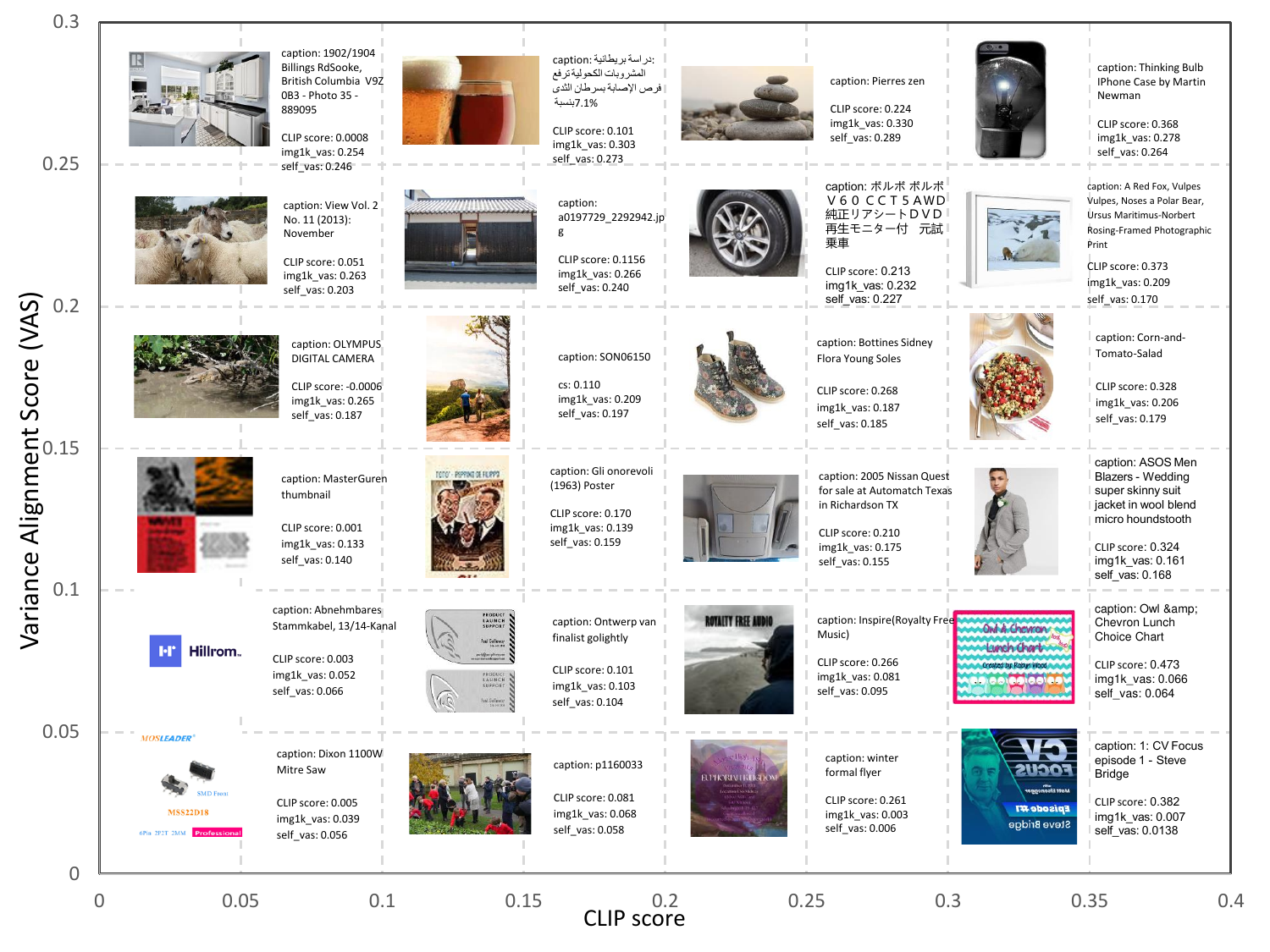}}
    \vspace{-1.0em}
    \label{fig:vis_many}
    \vspace{-0.7em}
\end{figure*}

\end{document}